\theoremstyle{plain}
\newtheorem{theorem}{Theorem}[section]
\newtheorem{lemma}[theorem]{Lemma}
\newtheorem{corollary}[theorem]{Corollary}
\newtheorem{prop}[theorem]{Proposition}
\newtheorem{claim}[theorem]{Claim}
\theoremstyle{definition}
\newtheorem{example}[theorem]{Example}
\DeclareMathOperator*{\E}{\mathbb{E}}
\newcommand{\Ex}{\E}
\providecommand{\F}{\mathcal{F}}
\providecommand{\eps}{\epsilon}
\newcommand{\eqdef}{\triangleq}
\newcommand{\R}{\mathbb{R}}
\newcommand{\N}{\mathbb{N}}
\title{Submultiplicative Glivenko-Cantelli and\texorpdfstring{\\}{ }Uniform Convergence of Revenues}
 \author{
Noga Alon\thanks{Tel Aviv University, Israel, and Microsoft Research. \texttt{nogaa@tau.ac.il}.}
\and Moshe Babaioff\thanks{Microsoft Research. \texttt{moshe@microsoft.com}.}
\and Yannai A. Gonczarowski\thanks{The Hebrew University of Jerusalem, Israel, and Microsoft Research. \texttt{yannai@gonch.name}.}
\and Yishay Mansour\thanks{Tel Aviv University, Israel, and Google Research, Israel. \texttt{mansour@tau.ac.il}.
The research was done while author was co-affiliated with Microsoft Research.}
\and Shay Moran\thanks{Institute for Advanced Study, Princeton. \texttt{shaymoran1@gmail.com}.
Part of the research was done while author was co-affiliated with Microsoft Research.}
\and  Amir Yehudayoff\thanks{Technion --- Israel Institute of Technology, Israel. \texttt{amir.yehudayoff@gmail.com}.}
}
\date{November 4, 2017}
\begin{document}

\maketitle

\begin{abstract}
In this work we derive a variant of the classic Glivenko-Cantelli Theorem,
which asserts uniform convergence of the empirical Cumulative Distribution Function (CDF)
to the CDF of the underlying distribution. Our variant allows for tighter convergence bounds for extreme values of the CDF.

We apply our bound in the context of \emph{revenue learning},
which is a well-studied problem in economics and algorithmic game theory.
We derive sample-complexity bounds
on the uniform convergence rate of the empirical revenues to the true revenues,
assuming a bound on the $k$th moment of the valuations, for any (possibly fractional) $k>1$.

For uniform convergence in the limit, we give a complete characterization and a zero-one~law:
if the first moment of the valuations is finite, then uniform convergence almost surely occurs; conversely,
if the first moment is infinite, then uniform convergence almost never occurs.
\end{abstract}

\section{Introduction}
\label{sec:intro}

A basic task in machine learning is to learn an unknown distribution $\mu$,
given access to samples from it.
A natural and widely studied criterion for learning a distribution
is approximating its Cumulative Distribution Function (CDF).
The seminal Glivenko-Cantelli Theorem~\citep{glivenko1933,cantelli1933}
addresses this question when the distribution $\mu$ is over the real numbers.
It determines the behavior of the empirical distribution function
as the number of samples grows:
let $X_1,X_2,\ldots$ be a sequence of i.i.d.\ random variables
drawn from a distribution $\mu$ on $\R$
with Cumulative Distribution Function (CDF) $F$,
and let $x_1, x_2, \ldots$ be their realizations.
The \emph{empirical distribution} $\mu_n$ is
\[\mu_n \eqdef \frac{1}{n} \sum_{i=1}^n \delta_{x_i},\]
where $\delta_{x_i}$ is the constant distribution supported on $x_i$.
Let $F_n$ denote the CDF of $\mu_n$, i.e., $F_n(t)\eqdef\frac{1}{n}\cdot\bigl|\{1\leq i\leq n : x_i \leq t\}\bigr|$.
The Glivenko-Cantelli Theorem formalizes
the statement that $\mu_n$ converges to $\mu$ as $n$ grows,
by establishing that $F_n(t)$ converges to $F(t)$, uniformly over all $t\in\mathbb{R}$:

\begin{theorem}[Glivenko-Cantelli Theorem, \citeyear{glivenko1933}]
\label{t:GCthm}
Almost surely,
\[\lim_{n \to \infty} \sup_t \bigl| F_n(t) - F(t)\bigr| =  0.\]
\end{theorem}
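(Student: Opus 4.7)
The plan is to reduce uniform convergence over all $t\in\R$ to simultaneous convergence at a finite grid of points, and then invoke the strong law of large numbers (SLLN) at each grid point. The monotonicity of $F$ and $F_n$ is the key structural feature that makes this reduction work.

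Fix $\eps>0$. Since $F$ is non-decreasing with $\lim_{t\to-\infty}F(t)=0$ and $\lim_{t\to+\infty}F(t)=1$, I can choose a finite grid $-\infty = t_0 < t_1 < \cdots < t_m = +\infty$ such that $F(t_i^-) - F(t_{i-1}) \leq \eps$ for every $i\in\{1,\ldots,m\}$, where $F(t^-)$ denotes the left limit. Such a grid exists by walking up the range $[0,1]$ in steps of size at most $\eps$, jumping over atoms of $F$ as they occur. For each $i$, the quantities $F_n(t_i) = \tfrac{1}{n}\sum_{j=1}^n \mathbf{1}[X_j\leq t_i]$ and $F_n(t_i^-) = \tfrac{1}{n}\sum_{j=1}^n \mathbf{1}[X_j<t_i]$ are empirical averages of i.i.d.\ Bernoulli variables with means $F(t_i)$ and $F(t_i^-)$, respectively. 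The SLLN therefore yields $F_n(t_i)\to F(t_i)$ and $F_n(t_i^-)\to F(t_i^-)$ almost surely, and since only finitely many such statements are involved, they hold simultaneously on a single event of probability one.

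Now, for an arbitrary $t\in\R$ lying in some interval $[t_{i-1},t_i)$, monotonicity gives $F_n(t_{i-1})\leq F_n(t)\leq F_n(t_i^-)$ and $F(t_{i-1})\leq F(t)\leq F(t_i^-)$. Combined with $F(t_i^-)-F(t_{i-1})\leq\eps$, this yields
\[
\bigl|F_n(t)-F(t)\bigr| \;\leq\; \max_{1\leq i\leq m-1}\Bigl\{\bigl|F_n(t_i)-F(t_i)\bigr|,\,\bigl|F_n(t_i^-)-F(t_i^-)\bigr|\Bigr\} + \eps,
\]
so $\limsup_n \sup_t \bigl|F_n(t)-F(t)\bigr| \leq \eps$ on the same probability-one event. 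Intersecting the corresponding events over $\eps_k = 1/k$ (a countable collection) yields the theorem.

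The main subtlety to watch out for is the treatment of atoms of $F$: it is essential to discretize using both $F(t_i)$ and the left limit $F(t_i^-)$, and to control the gap $F(t_i^-)-F(t_{i-1})$ rather than $F(t_i)-F(t_{i-1})$, since individual jumps of $F$ can be arbitrarily large. Beyond this discretization step and a routine measurability remark (the sup can be taken over a countable dense set augmented by the $t_i$), the argument is entirely a pointwise SLLN plus a monotonicity sandwich.
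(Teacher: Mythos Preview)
The paper does not prove Theorem~\ref{t:GCthm}; it is stated as the classical 1933 result of Glivenko and Cantelli and used as background for the paper's own contributions (the submultiplicative variants, Theorems~\ref{t:qual} and~\ref{thm:samplecomplexity}). So there is no ``paper's own proof'' to compare against here.

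That said, your argument is the standard and correct proof: a finite quantile grid controlling the increments $F(t_i^-)-F(t_{i-1})\le\eps$, the strong law of large numbers at the finitely many grid points (for both $F_n(t_i)$ and the left limits $F_n(t_i^-)$), and a monotonicity sandwich to pass from the grid to all $t$. Your handling of atoms via left limits is exactly the point that needs care, and you treat it correctly. The only cosmetic remark is that the endpoints $t_0=-\infty$, $t_m=+\infty$ are conventions rather than real numbers, but since $F$, $F_n$ agree identically at those limits the corresponding terms in the maximum vanish and your indexing $1\le i\le m-1$ is fine.
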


Some twenty years after \citeauthor{glivenko1933} and \citeauthor{cantelli1933} discovered this theorem, \citeauthor*{dkw1956} (DKW)
strengthened this result by giving an almost\footnote{The inequality due to \cite{dkw1956} has a larger constant $C$ in front of the exponent on the right hand side.} tight quantitative
bound on the convergence rate. In \citeyear{massart1990}, \citeauthor{massart1990}
proved a tight inequality, confirming a conjecture due
to \cite{birnbaummccarty1958}:
\begin{theorem}[\citealp{massart1990}]
\label{t:massart}
$\Pr\Bigl[ \sup_t \bigl| F_n (t) - F(t)\bigr|  > \eps \Bigr]\leq 2\exp(-2n\eps^2)$
for all $\eps >0$, $n\in\mathbb{N}$.
\end{theorem}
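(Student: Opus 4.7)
The plan is to follow the standard three-step strategy: reduce to the uniform distribution via the probability integral transform, reduce the two-sided supremum to a pair of one-sided suprema expressed via order statistics, and then apply an exponential moment bound.

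First, using $X_i = F^{-1}(U_i)$ with $U_1, \ldots, U_n$ i.i.d.\ uniform on $[0,1]$ and $F^{-1}$ the generalized inverse of $F$, one checks that the distribution of $\sup_t |F_n(t) - F(t)|$ coincides with that of $\sup_{u \in [0,1]} |G_n(u) - u|$, where $G_n$ is the empirical CDF of $U_1, \ldots, U_n$. So I may assume $F(u) = u$ on $[0,1]$, with only minor bookkeeping needed at atoms of $F$. Next, writing $\sup_u |G_n(u) - u| = \max(D_n^+, D_n^-)$ for the two one-sided deviations, a union bound reduces the theorem to the one-sided estimate $\Pr[D_n^+ > \eps] \leq \exp(-2n\eps^2)$ together with its mirror. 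In terms of the uniform order statistics $U_{(1)} \leq \cdots \leq U_{(n)}$, the continuum supremum collapses to the finite maximum $D_n^+ = \max_{1 \leq i \leq n}(i/n - U_{(i)})$.

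Third, I would apply a Chernoff bound
\[
\Pr[D_n^+ > \eps] \;\leq\; e^{-\lambda \eps} \, \E\!\left[e^{\lambda D_n^+}\right]
\]
and seek to establish the sub-Gaussian moment bound $\E[e^{\lambda D_n^+}] \leq \exp(\lambda^2/(8n))$; optimizing at $\lambda = 4n\eps$ then yields the desired $\exp(-2n\eps^2)$. Controlling this moment generating function is the crux of the argument, and would be carried out by a telescoping inequality over the index $i$ together with explicit estimates on the joint Beta/multinomial law of the order statistics (equivalently, on the allocation of samples into sub-intervals of $[0,1]$).

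The main obstacle is pinning down the precise constant $2$ in the exponent. A naive union bound over a covering of $[0,1]$, or a generic chaining argument, or even an Azuma-type bound on the martingale $G_n(u) - u$ as $u$ varies, would each introduce an extra constant factor (and in the union-bound case, a spurious logarithmic factor in $n$). The fact that the supremum does not suffer any loss relative to a single Bernoulli deviation of parameter $F(t)$ is exactly the content of Massart's refinement over the original DKW inequality; any proof I write must mirror this delicacy, most likely through the moment-generating-function calculation outlined above.
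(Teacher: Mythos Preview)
The paper does not prove this theorem. Theorem~\ref{t:massart} is stated with a citation to \cite{massart1990} and is used as a black box---most notably in Claim~\ref{claim3} within the proof of Lemma~\ref{lem:quant}, where the DKW--Massart bound is invoked directly to control the region where $F(t)\geq p$. So there is no in-paper proof to compare your proposal against.

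As for your sketch itself: the reduction to the uniform case and the decomposition $\sup_u |G_n(u)-u|=\max(D_n^+,D_n^-)$ are standard and correct. The gap is the claimed sub-Gaussian moment bound $\E[e^{\lambda D_n^+}]\leq \exp\bigl(\lambda^2/(8n)\bigr)$. You have not indicated how the ``telescoping inequality over the index $i$'' would actually yield this, and in fact Massart's original argument does not proceed via a direct MGF bound of this form; it goes through a delicate combinatorial/analytic inequality (building on work of Bretagnolle and on tight binomial tail estimates) to get the sharp constant. A generic sub-Gaussian calculation on the order statistics will typically lose a constant factor, which is precisely the difficulty you flag in your last paragraph. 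So while your high-level plan is the right shape, the step you label ``the crux'' is doing all the work and is not yet an argument.
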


The above theorems show that, with high probability, $F$ and $F_n$
are close up to some \emph{additive} error.
We would have liked to prove a stronger, \emph{multiplicative}
bound on the error:
\[\forall t : \bigl|F(t) - F_n(t)\bigr| \leq \eps \cdot F(t).\]
However, for some distributions, the above event has probability $0$, no matter how large $n$ is.
For example, assume that $\mu$ satisfies $F(t)> 0$ for all $t$.
Since the empirical measure $\mu_n$ has finite support,
there is $t$ with $F_n(t) = 0$; for such a value of $t$, such a multiplicative
approximation fails to hold.

So, the above multiplicative requirement is too strong to hold
in general.
A natural compromise is to consider a \emph{submultiplicative}
bound:
\[\forall t : \bigl| F(t) - F_n(t) \bigr| \leq \eps\cdot F(t)^\alpha,\]
where $0 \leq \alpha < 1$.
When $\alpha=0$, this is the additive bound studied in the context
of the Glivenko-Cantelli Theorem.
When $\alpha = 1$, this is the unattainable multiplicative bound.
Our first main result shows that
the case of $\alpha < 1$ is attainable:

\begin{theorem}[Submultiplicative Glivenko-Cantelli Theorem]\label{t:qual}
Let $\eps > 0$, $\delta > 0$ and $0 \leq \alpha < 1$.
There exists $n_0(\eps,\delta,\alpha)$ such that for all $n>n_0$,
with probability $1-\delta$:
\[ \forall t  : \bigl\lvert F(t) - F_n(t)\bigr\rvert \leq \eps\cdot F(t)^\alpha.\]
\end{theorem}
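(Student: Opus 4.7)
The plan is to partition the real line into two regimes according to the value of $F(t)$ and apply different tools in each. Introduce a threshold $\tau \eqdef (\eps/4)^{1/(1-\alpha)}$, chosen so that any $t$ with $F(t) < \tau$ satisfies $F(t) \leq \eps F(t)^\alpha/4$. Intuitively, $\tau$ marks the boundary where the submultiplicative slack $\eps F(t)^\alpha$ begins to dominate $F(t)$ itself.

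In the first regime, $F(t)\ge \tau$, the target bound $\eps F(t)^\alpha$ is at least $\eps\tau^\alpha$, a positive constant depending only on $\eps$ and $\alpha$. I would invoke \Cref{t:massart} with parameter $\eps' \eqdef \eps\tau^\alpha$, which immediately yields $\lvert F_n(t)-F(t)\rvert \le \eps\tau^\alpha \le \eps F(t)^\alpha$ simultaneously for all such $t$, with failure probability bounded by $2\exp(-2n\eps^2\tau^{2\alpha})$.

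In the second regime, $F(t)<\tau$, the inequality $F(t)-F_n(t)\le \eps F(t)^\alpha$ is automatic because $F(t)-\eps F(t)^\alpha<0\le F_n(t)$ by the choice of $\tau$. The content is therefore the opposite inequality $F_n(t)\le F(t)+\eps F(t)^\alpha$. I would discretize the range of $F$-values along a geometric grid $u_j \eqdef \tau\cdot 2^{-j}$ for $j=0,1,\ldots,J$ with $u_J\approx \delta/(2n)$ (so $J=O(\log(n/\delta))$), and let $t_j \eqdef \inf\{t:F(t)\ge u_j\}$. Monotonicity of $F_n$ reduces the problem to upper-bounding $F_n(t_j^-)$ at the $O(\log n)$ grid points. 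At each such point, $nF_n(t_j^-)$ is binomial with mean at most $nu_j$, and a Bernstein-type (Poisson-tail) Chernoff bound gives $F_n(t_j^-)\le u_j+c\eps u_j^\alpha$ with failure probability $\exp(-c'n\eps u_j^\alpha)$. Values of $F(t)$ below $u_J$ are dispatched separately: $F_n(t_J^-)=0$ with probability $\ge 1-nu_J\ge 1-\delta/2$ by a union bound over the $n$ samples.

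The main obstacle will be the second regime: the grid must be fine enough that adjacent points carry comparable $F$-values, yet coarse enough that its size does not overwhelm the per-point tail bound. The assumption $\alpha<1$ is exactly what powers this balance --- it forces the failure probability $\exp(-c'n\eps u_j^\alpha)$ at a grid point of tiny mass $u_j\approx\delta/n$ to decay as a negative power of $n$, which eventually dominates the $O(\log(n/\delta))$ grid-point count and makes the union bound summable. Combining the two regimes and adjusting constants gives the claimed $n_0(\eps,\delta,\alpha)$.
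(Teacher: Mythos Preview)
Your proposal is correct and follows the same three-part skeleton as the paper's proof of Lemma~\ref{lem:quant} (from which Theorem~\ref{t:qual} is derived): a DKW/Massart bound where $F(t)$ is large, a union bound over samples where $F(t)$ is tiny (below $\approx \delta/n$), and a grid argument in between. The paper just names the three pieces $I_{[p,1]}$, $I_{[0,q/n]}$, $I_{(q/n,p)}$; your ``regime~1'', ``tail below $u_J$'', and ``dyadic grid'' are the same partition.

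The genuine difference is in the middle regime. You use a dyadic grid $u_j=\tau\cdot 2^{-j}$ together with a Chernoff/Poisson tail bound at each of the $O(\log(n/\delta))$ grid points; this works because in this regime the target $\eps u_j^\alpha$ is a large multiple of the mean $u_j$, so the multiplicative Chernoff bound gives failure probability $\exp(-c\,n\eps u_j^\alpha)$, and even at the finest scale $u_J\approx\delta/n$ this is $\exp(-c\,\eps\delta^\alpha n^{1-\alpha})\to 0$. The paper instead uses only Markov's inequality at each grid point, which yields the much weaker per-point bound $p_i/(\eps p_{i+1}^\alpha)$; a dyadic grid would make this a non-decaying constant and the union bound over $\log n$ points would blow up. To compensate, the paper takes a super-geometric grid $p_i=p^{((1+\alpha)/(2\alpha))^i}$, collapsing the interval $(q/n,p]$ into only $O(\log\log(n/\delta))$ points whose Markov bounds form a summable sequence. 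So: your route trades a sharper concentration inequality for a simpler grid; the paper's route is more elementary (Markov only) but requires the clever doubly-exponential spacing. Both yield polynomial dependence of $n_0$ on $\delta$, which the paper notes is unavoidable.
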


It is worth pointing out a central difference between Theorem~\ref{t:qual}  and other generalizations
of the Glivenko-Cantelli Theorem: for example, the seminal work of \cite{zbMATH03391742}
shows that for every class of events $\F$ of VC dimension $d$,
there is $n_0=n_0(\eps,\delta,d)$ such that for every $n\geq n_0$, with probability
$1-\delta$ it holds that $\forall A\in\F  : \bigl\lvert p(A) - p_n(A)\bigr\rvert \leq \eps$.
This yields Glivenko-Cantelli by plugging $\F=\bigl\{(-\infty, t] : t\in\R\bigr\}$, which has VC dimension $1$.
In contrast, the submultiplicative bound from Theorem~\ref{t:qual} 
does not even extend to the VC dimension $1$ class $\F=\bigl\{\{t\} : t\in\R\bigr\}$.
Indeed, pick any distribution $p$ over $\R$ such that $p\bigl(\{t\}\bigr)=0$ for every $t$,
and observe that for every sample $x_1,\ldots, x_n$, it holds that
$p_n\bigl(\{x_i\}\bigr)\geq \nicefrac{1}{n}$, however $p\bigl(\{x_i\}\bigr)=0$, and therefore, as long as $\alpha > 0$, 
it is never the case that $\Bigl\lvert p\bigl(\{x_i\}\bigr) - p_n\bigl(\{x_i\}\bigr) \Bigr\rvert \leq p\bigl(\{x_i\}\bigr)^\alpha$.
Theorem~\ref{t:qual} is proven in Section~\ref{sec:GC},
which also includes other extensions.

Our second main result gives an explicit upper bound on
$n_0(\eps,\delta,\alpha)$:
\begin{theorem}[Submultiplicative Glivenko-Cantelli Bound]\label{thm:samplecomplexity}
Let $\eps,\delta\leq\nicefrac{1}{4}$, and $\alpha < 1$. Then
\[n_0(\eps,\delta,\alpha)\leq
\max\left\{\frac{ \ln\bigl(\nicefrac{6}{\delta}\bigr)}{2\eps^2}\Bigl(\frac{\eps\delta}{3}\Bigr)^{-\frac{4\alpha}{1-\alpha}},~~~
(D+1)\Biggl(10\cdot\ln\left(12\cdot\frac{D+4}{\delta(1-\alpha)}\right)\Biggr)^{\frac{4\alpha}{1-\alpha}}\right\}, \]
where
$D=\frac{\ln(\nicefrac{6}{\delta})}{2\eps^2}
\Bigl(\frac{\eps\delta}{6}\cdot
{\ln\bigl(\frac{1+\alpha}{2\alpha}\bigr)}\Bigr)^{-\frac{4\alpha}{1-\alpha}}$.
\end{theorem}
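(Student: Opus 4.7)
The plan is to partition the real line by the value of $F(t)$ into a \emph{head} region $\{t : F(t) \geq \tau\}$ and a \emph{tail} region $\{t : F(t) < \tau\}$, for a suitably chosen threshold $\tau > 0$, and then handle each region with a different concentration tool. In the head, the target bound $\eps F(t)^\alpha$ is at least $\eps\tau^\alpha$, so an additive DKW-style bound (\cref{t:massart}) with tolerance $\eps\tau^\alpha$ suffices. This uses $n \gtrsim \ln(1/\delta)/(\eps^2 \tau^{2\alpha})$ samples and fails with probability at most (say) $\delta/3$. The choice of $\tau$ ultimately drives the factor $(\eps\delta/3)^{-4\alpha/(1-\alpha)}$ appearing in the theorem.

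In the tail region I would run a geometric scan. Let $\tau_i = \tau \cdot 2^{-i}$ for $i=0,1,\ldots,M$, and for each $i$ pick a point $t_i$ satisfying $F(t_i^-) \leq \tau_i \leq F(t_i)$ (using right-continuity of $F$). At every such grid point, apply a multiplicative Chernoff inequality with deviation parameter $\eta_i = \eps \cdot F(t_i)^{\alpha-1}$, yielding $|F_n(t_i) - F(t_i)| \leq \eps F(t_i)^\alpha$ with failure probability at most $2\exp(-c\, n \eps^2 F(t_i)^{2\alpha-1})$. Taking a union bound over the $O(M)$ grid points (with budget $\delta/3$) and then extending to an arbitrary $t$ in the tail via monotonicity: for $t \in (t_{i+1}, t_i]$ one sandwiches $F_n(t)$ between $F_n(t_{i+1})$ and $F_n(t_i)$ and $F(t)$ between $\tau_{i+1}$ and $\tau_i$, so the dyadic-scale bounds transfer to $t$ with only a constant-factor loss.

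Finally, for the \emph{deep tail} $\{t : F(t) < \tau_M\}$ I would argue separately, by choosing $\tau_M$ small enough that with probability $\geq 1 - \delta/3$ no sample lands in this set (which requires $n\tau_M \leq \delta/3$ via a union bound over samples), so $F_n \equiv 0$ there; then the submultiplicative inequality reduces to $F(t)^{1-\alpha} \leq \eps$, automatically valid once $\tau_M \leq \eps^{1/(1-\alpha)}$. Both constraints on $\tau_M$ determine $M \approx \log_2(\tau/\tau_M)$, which is only polylogarithmic in $1/(\eps\delta)$, so the union-bound price is modest.

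The main obstacle is the bookkeeping: tuning $\tau$, $\tau_M$, the per-grid-point failure probability, and the extra slack needed to absorb the dyadic constants into the final bound, while the multiplicative-Chernoff condition $n\eps^2 F(t_i)^{2\alpha-1} \gtrsim \ln(M/\delta)$ behaves qualitatively differently for $\alpha < \tfrac12$ (worst case at $\tau_i = \tau$, controlled by DKW) versus $\alpha \geq \tfrac12$ (worst case at $\tau_i = \tau_M$, controlled by the deep-tail choice). The balance between these two regimes, together with $\tau = (\eps_{\mathrm{DKW}}/\eps)^{1/\alpha}$, is precisely what produces the exponent $\tfrac{4\alpha}{1-\alpha}$ and the two-term maximum form appearing in the statement.
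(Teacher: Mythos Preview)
Your three-region split---head via DKW/Massart, a geometric scan in the middle, and a deep tail where with high probability no sample lands---matches the skeleton of the paper's Lemma~\ref{lem:quant} exactly; your head and deep-tail arguments are the paper's Claims~\ref{claim3} and~\ref{claim1}. The genuine difference is in the middle region. The paper does \emph{not} use Chernoff there: it takes a \emph{superexponential} scale $p_i = p^{((1+\alpha)/(2\alpha))^i}$ (so the number of grid points is only doubly logarithmic in $n$) and controls each grid point with nothing more than \emph{Markov's inequality}, obtaining
\[
\Pr\bigl[F_n^-(t_i) > \eps\, p_{i+1}^\alpha\bigr] \;\leq\; \frac{p_i}{\eps\, p_{i+1}^\alpha} \;=\; \frac{1}{\eps}\,p^{(\frac{1-\alpha}{2})(\frac{1+\alpha}{2\alpha})^{i}}.
\]
Summing and setting the total to $\delta/3$ forces $p \approx (\eps\delta)^{2/(1-\alpha)}$; plugging this $p$ into the DKW head term $\ln(1/\delta)/(\eps^2 p^{2\alpha})$ is precisely what manufactures the exponent $\tfrac{4\alpha}{1-\alpha}$ and the two-term $\max$ in the statement.

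Your Chernoff-plus-dyadic argument would not reproduce that exponent, and in fact the Chernoff form you quote is off: in the scan region $F(t_i) \leq \tau \leq \eps^{1/(1-\alpha)}$, so the relative deviation $\eta = \eps\,F(t_i)^{\alpha-1} \geq 1$, and the relevant upper-tail exponent is $\Theta\!\bigl(n\eps\,F(t_i)^{\alpha}\bigr)$, not $n\eps^2 F(t_i)^{2\alpha-1}$. With the correct exponent there is no $\alpha \lessgtr \tfrac12$ dichotomy (the worst grid point is always $\tau_M$), and balancing the constraints gives roughly $n^{1-\alpha}\gtrsim \ln(M/\delta)/(\eps\delta^\alpha)$ from the scan and $n\gtrsim \ln(1/\delta)/\eps^{2/(1-\alpha)}$ from the head at $\tau=\eps^{1/(1-\alpha)}$---strictly \emph{smaller} exponents than the paper's. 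So your route, executed carefully, would prove the theorem \emph{a fortiori} via a tighter bound; but your closing claim that it ``precisely'' yields the exponent $\tfrac{4\alpha}{1-\alpha}$ is not right---that exponent is an artifact of the paper's Markov step, not of any Chernoff balancing.
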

Note that for fixed $\eps,\delta$, when $\alpha\to 0$ the
above bound approaches the familiar
$O\Bigl(\frac{\ln(\nicefrac{1}{\delta})}{\eps^2}\Bigr)$ bound by DKW
and \citeauthor{massart1990} for $\alpha=0$. On the other hand, when $\alpha\to 1$
the above bound tends to $\infty$, reflecting the
fact that the multiplicative variant of
Glivenko-Cantelli ($\alpha=1$) does not hold.
Theorem~\ref{thm:samplecomplexity} is proven in Appendix~\ref{app:thm:rev}.

Note that the dependency of the above bound on the confidence parameter $\delta$ is polynomial.
This contrasts with standard uniform convergence rates, 
which, due to applications of concentration bounds such as Chernoff/Hoeffding, 
achieve logarithmic dependencies on $\delta$.
These concentration bounds are not applicable in our setting when the CDF values are very small,
and we use Markov's inequality instead.
The following example shows that a polynomial dependency on $\delta$ 
is indeed necessary and is not due to a limitation of our proof.

\begin{example}
For large $n$, consider $n$ independent samples $x_1,\ldots,x_n$ from the uniform distribution over~$[0,1]$,
and set $\alpha = \nicefrac{1}{2}$ and $\eps = 1$.
The probability of the event
\[\exists i :x_i\leq1/n^3\]
is roughly $1/n^2$: indeed, the complementary event has probability 
$(1-1/n^3)^n \approx \exp(-1/n^2) \approx 1-1/n^2$.
When this happens, we have:
$F_n(1/n^3) \geq 1/n >\!> 1/n^3 + 1/n^{\nicefrac{3}{2}} 
= F(1/n^3) + \bigl[F(1/n^3)\bigr]^{\nicefrac{1}{2}}$.
Note that this happens with probability inverse polynomial in $n$ 
(roughly $1/n^2$) and not inverse exponential.
\end{example}

\begin{sloppypar}
\paragraph{An application to revenue learning.}
We demonstrate an application of our Submultiplicative
Glivenko-Cantelli Theorem in the context of a widely studied problem
in economics and algorithmic game theory: the problem of revenue
learning. In the setting of this problem, a seller has to decide
which price to post for a good she wishes to sell. Assume that each
consumer draws her private valuation for the good from an unknown
distribution $\mu$. We envision that a consumer with valuation~$v$
will buy the good at any price $p\leq v$, but not at any higher
price. This implies that the expected revenue at price $p$ is simply
$r(p)\eqdef p\cdot q(p)$, where $q(p)\eqdef\Pr_{V\sim\mu}[V\geq p]$.
\end{sloppypar}

In the language of machine learning, this problem can be phrased as
follows: the examples domain $Z\eqdef \mathbb{R}^+$ is the set of
all valuations $v$. The hypothesis space $H\eqdef \mathbb{R}^+$ is
the set of all prices $p$. The revenue (which is a gain, rather than
loss) of a price $p$ on a valuation $v$ is the function $p\cdot
1_{\{p\leq v\}}$.

The well-known \emph{revenue maximization} problem is to find a price $p^*$
that maximizes the expected revenue,
given a sample of valuations drawn i.i.d.\ from $\mu$.
In this paper, we consider the more demanding \emph{revenue
estimation} problem: the problem of well-approximating $r(p)$, simultaneously for all prices~$p$,
from a given sample of valuations. (This clearly also implies
a good estimation of the maximum revenue and of a price that yields it.)
More specifically, we address the following question:
when do the \emph{empirical revenues}, $r_n(p)\eqdef p\cdot q_n(p)$, where $q_n(p)\eqdef\Pr_{V\sim\mu_n}[V\geq p]=\frac{1}{n}\cdot\bigl|\{1\leq i\leq n : x_i \geq t\}\bigr|$,
uniformly converge to the true revenues $r(p)$?
More specifically, we would like to show that for some $n_0$, for $n\geq n_0$
we have with probability $1-\delta$ that
\[
\bigl| r(p) - r_n(p) \bigr| \leq \epsilon.
\]
The revenue estimation problem is a basic instance of the more general
problem of uniform convergence of empirical estimates.
The main challenge in this instance is that the prices are unbounded (and so are the
private valuations that are drawn from the distribution $\mu$).

Unfortunately, there is no (upper) bound on $n_0$ that is only a function of
$\epsilon$ and $\delta$. Moreover, even if we add the expectation of
valuations, i.e.,  $\Ex[V]$ where $V$ is distributed according to
$\mu$, still there is no bound on $n_0$ that is a function of only those three
parameters (see Section~\ref{sec:UC1MOM} for an example).
In contrast, when we consider higher moments of the distribution
$\mu$, we are able to derive bounds on the value of $n_0$.
These bounds are based on our Submultiplicative Glivenko-Cantelli
Bound.
Specifically, assume that $\Ex_{V\sim \mu}[V^{1+\theta}]\leq C$ for
some $\theta>0$ and $C\geq 1$. Then, we show that for any
$\eps,\delta\in (0,1)$, we have
\[\Pr \Bigl[\exists v :~\bigl\lvert r(v) - r_n(v)\bigr\rvert > \eps\Bigr] \leq
\Pr \biggl[\exists v :~\bigl\lvert q(v) - q_n(v)\bigr\rvert >
\frac{\eps}{C^{\frac{1}{1+\theta}}}q(v)^{\frac{1}{1+\theta}}\biggr].\]
This essentially reduces uniform convergence bounds to our
Submultiplicative Glivenko-Cantelli variant. It then follows that there exists
$n_0(C,\theta,\epsilon,\delta)$ such that for any $n\geq n_0$, with
probability at least~$1-\delta$,
\[
\forall v:\;\;\; \bigl|r_n(v)-r(v)\bigr|\leq \eps.
\]
We remark that when $\theta$ is large, our bound yields
$n_0\approx O\Bigl(\frac{\ln(\nicefrac{1}{\delta})}{\eps^2}\Bigr)$, which
recovers the standard sample complexity bounds obtainable via DKW and \citeauthor{massart1990}.

When $\theta\to 0$, our bound diverges to infinity,
reflecting the fact (discussed above)
that there is no bound on $n_0$ that depends only
on $\eps,\delta$, and $\Ex[V]$.
Nevertheless, we find that $\Ex[V]$
qualitatively determines whether uniform convergence occurs
in the limit. Namely, we show that
\begin{itemize}
\item
If $\Ex_\mu[V] < \infty$, then almost surely
$\lim_{n\to\infty}\sup_{v}\bigl\lvert r(v) - r_n(v)\bigr\rvert=0$,
\item
Conversely, if $\Ex_\mu[V] = \infty$, then almost never
$\lim_{n\to\infty}\sup_{v}\bigl\lvert r(v) - r_n(v)\bigr\rvert =0$.
\end{itemize}

\subsection{Related work}

\paragraph{Generalizations of Glivenko-Cantelli.}
Various generalizations of the Glivenko-Cantelli Theorem were established.
These include uniform convergence bounds for more general classes of functions as well as
more general loss functions (for example, \citealp{zbMATH03391742,VapnikBook,KolPan2000,DBLP:journals/jmlr/BartlettM02}).
The results that concern unbounded loss functions are most relevant to this work 
(for example, \citealp{CortesMM10,CortesGM13,VapnikBook}).
We next briefly discuss the relevant results from \cite{CortesGM13} in the context of this paper; 
more specifically, in the context of Theorem~\ref{t:qual}.
To ease presentation, set $\alpha$ in this theorem to be $\nicefrac{1}{2}$.
Theorem~\ref{t:qual} analyzes the event where the empirical quantile
is bounded by\footnote{For consistency with the canonical statement of the Glivenko-Cantelli theorem, we stated our submultiplicative variants of this theorem with regard to the CDFs $F_n$ and $F$. However, these results also hold when replacing these CDFs with the respective quantiles (tail CDFs) $q_n$ and $q$. See Section~\ref{sec:rev-qual} for details.}
\begin{align*}
q_n(p) &\leq q(p) +  \eps\sqrt{q(p)},\\
q_n(p) &\geq q(p) -  \eps\sqrt{q(p)}.
\end{align*}
whereas, \cite{CortesGM13} analyzes the event where it is bounded it by:
\begin{align*}
q_n(p) &\leq  \tilde O\bigl(q(p) + \sqrt{q(p)/n} + \nicefrac{1}{n}\bigr),\\
q_n(p) &\geq \tilde\Omega\bigl(q(p) - \sqrt{q_n(p)/n} - \nicefrac{1}{n}\bigr)
\end{align*}
Thus, the main difference is the additive $\nicefrac{1}{n}$ term in
the bound from \cite{CortesGM13}. In the context of uniform convergence of revenues, it is crucial
to use the upper bound on the empirical quantile as we do, 
as it guarantees that large prices will not
overfit, which is the main challenge in proving uniform convergence
in this context. In particular, the upper bound from \cite{CortesGM13}
does not provide any guarantee on the revenues of prices
$ p >\!> n$, as for such prices $p\cdot \nicefrac{1}{n} >\!> 1$.

It is also worth pointing out
that our lower bound on the empirical quantile
implies that with high probability the quantile of the maximum sampled point
is at least $\nicefrac{1}{n^2}$ (or more generally, at least $\nicefrac{1}{n^{1/\alpha}}$ when $\alpha\neq\nicefrac{1}{2}$),
while the bound from \cite{CortesGM13} does not imply any non-trivial lower bound.

Another, more qualitative difference is that unlike the bounds in \cite{CortesGM13} that
apply for general VC classes, our bound is tailored 
for the class of thresholds (corresponding to CDF/quantiles),
and does not extend even to other classes of VC dimension 1
(see the discussion after Theorem~\ref{t:qual}).

\paragraph{Uniform convergence of revenues.}
The problem of \emph{revenue maximization} is a central problem in
economics and Algorithmic Game Theory (AGT). The seminal work of \cite{Myerson}
shows that given a valuation distribution for a single good, the revenue-maximizing selling mechanism for this good
is a posted-price mechanism. In the recent years, there has been a
growing interest in the case where the valuation distribution is
unknown, but the seller observes samples drawn from it. Most papers in this
direction assume that the distribution meets some tail condition
that is considered ``natural'' within the algorithmic game theory
community, such as boundedness \citep{Morgenstern-Roughgarden,Roughgarden-Schrijvers,Morgenstern-Roughgarden-b,Balcan-Sandholm-Vitercik,Gonczarowski-Nisan,Devanur-Huang-Psomas}\footnote{The analysis of \cite{Balcan-Sandholm-Vitercik} assumes a bound on the realized revenue (from any possible valuation profile) of any mechanism/auction in the class that they consider. For the class of posted-price mechanisms, this is equivalent to assuming a bound on the support of the valuation distribution. Indeed, for any valuation $v$, pricing at $v$ gives realized revenue $v$ (from the valuation $v$), and so unbounded valuations (together with the ability to post unbounded prices) imply unbounded realized revenues.},
such as a condition known as Myerson-regularity
\citep{DhangwatnotaiRY15,Huang-Mansour-Roughgarden,Cole-Roughgarden,Devanur-Huang-Psomas},
or such as a condition known as monotone hazard rate
\citep{Huang-Mansour-Roughgarden}.\footnote{Both Myerson-regularity
and monotone hazard rate are conditions on the second derivative of
the revenue as a function of the quantile of the underlying
distribution. In particular, they impose restrictions on the tail of
the distribution.} These papers then go on to derive computation- or
sample-complexity bounds on learning an optimal price (or an optimal selling mechanism from a given class) for a
distribution that meets the assumed condition.

A recurring theme in statistical learning theory
is that learnability guarantees are derived via a,
sometimes implicit, uniform convergence bound.
However, this has not been the case in the context of revenue learning.
Indeed, while some papers that studied bounded distributions~\citep{Morgenstern-Roughgarden,Roughgarden-Schrijvers,Morgenstern-Roughgarden-b,Balcan-Sandholm-Vitercik}
did use uniform convergence bounds as part of their analysis,
other papers, in particular those that considered unbounded distributions, had to bypass
the usage of uniform convergence by more specialized arguments.
This is due to the fact that
many unbounded distributions
do not satisfy any uniform convergence bound.
As a concrete example, the (unbounded, Myerson-regular) \emph{equal revenue distribution}\footnote{This is a distribution that satisfies the special property that all prices have the same expected revenue.} has an infinite expectation and
therefore, by our Theorem~\ref{t:UC1moment}, satisfies no uniform convergence, even
in the limit. Thus,
it turns out that
the works that studied the popular class of Myerson-regular distributions \citep{DhangwatnotaiRY15,Huang-Mansour-Roughgarden,Cole-Roughgarden,Devanur-Huang-Psomas} indeed could not have hoped to
establish learnability via a uniform convergence argument.
For instance, the way \cite{DhangwatnotaiRY15} and \cite{Cole-Roughgarden} establish learnability for Myerson-regular distributions
is by considering the guarded ERM algorithm (an algorithm that chooses an empirical revenue maximizing price that is smaller than, say, the $\sqrt{n}$th largest sampled price), and proving a uniform convergence bound,
not for all prices, but only for prices that are, say, smaller than
the $\sqrt{n}$th largest sampled price, and then arguing that larger
prices are likely to have a small empirical revenue, compared to the
guarded empirical revenue maximizer. This means that the guarded ERM
will output a good price, but it does not (and cannot) imply uniform convergence
for all prices.

We complement the extensive literature surveyed above in a few ways.
The first is
generalizing the revenue maximization problem to a revenue
estimation problem, where the goal is to uniformly estimate the revenue
of all possible prices, when no bound on the possible valuations is given (or even exists). The problem of revenue estimation arises naturally when the seller has additional considerations when pricing her good, such
as regulations that limit the price choice, bad publicity if the price is too high (or, conversely, damage to prestige if the price is too low),
or willingness to suffer some revenue loss for better market penetration (which may translate to more revenue in the future).
In such a case, the seller may wish to estimate the revenue loss due to posting a discounted (or inflated) price.

The second, and most important, contribution to the above literature is that we
consider arbitrary distributions rather than very specific and limited
classes of distributions (e.g., bounded, Myerson-regular, monotone hazard rate, etc.). Third, we derive finite sample bounds in
the case that the expected valuation is bounded for some moment
larger than~1.
We further derive a zero-one law for uniform convergence in the limit
that depends on the finiteness of the first moment.
Technically, our bounds are based on an additive error rather than multiplicative
ones, which are popular in the AGT community.

\subsection{Paper organization}
The rest of the paper is organized as follows.
We begin by presenting the application of our Submultiplicative Glivenko-Cantelli
to revenue estimation in Section~\ref{sec:rev}. In Section~\ref{sec:GC},
we prove the Submultiplicative Glivenko-Cantelli variant, and discuss some
extensions of it.
Section~\ref{sec:discus} contains a discussion 
and possible directions of future work.
Some of the proofs are deferred to the appendices.

\section{Uniform Convergence of Empirical Revenues}
\label{sec:rev}

In this section we demonstrate an application
of our Submultiplicative Glivenko-Cantelli variant by establishing
uniform convergence bounds for a family of unbounded random variables
in the context of revenue estimation.

\subsection{Model}
Consider a good $g$ that we wish to post a price for.
Let $V$ be a random variable that models the valuation of a random
consumer for $g$. Technically, it is assumed that $V$ is
a nonnegative random variable, and we denote by $\mu$ its induced distribution over $\mathbb{R}^+$.
A consumer who values $g$ at a valuation $v$ is willing to buy the good at any
price $p\leq v$, but not at any higher price. This implies that the realized revenue to the seller from a (posted) price $p$
is the random variable $p \cdot 1_{\{p\leq V\}}$.
The \emph{quantile} of a value $v \in \R^+$ is
\[q(v) = q(v;\mu) \eqdef\mu\bigl(\{x: x \geq v\}\bigr).\]
This models the fraction of the consumers in the population that are willing
to purchase the good if priced at $v$.
The expected revenue from a (posted) price $p \in \R^+$ is
\[r(p) = r(p;\mu) \eqdef \Ex_{\mu}\bigl[p \cdot 1_{\{p\leq V\}}\bigr] =  p \cdot q(p).\]

Let $V_1,V_2,\ldots$ be a sequence of i.i.d.\ valuations drawn from $\mu$,
and let $v_1,v_2,\ldots$ be their realizations.
The \emph{empirical quantile} of a value $v \in \R^+$ is
\[q_n(v) =  q(v;\mu_n) \eqdef
 \tfrac{1}{n}\cdot\bigl\lvert\{1 \leq i \leq n : v_i \geq v \} \bigr\rvert.\]
The \emph{empirical revenue} from a price $p \in \R^+$ is
\[r_n(p) = r(p;\mu_n)\eqdef\Ex_{\mu_n}\bigl[p \cdot 1_{\{p\leq V\}}\bigr] =p \cdot q_n(p).\]
The revenue estimation error for a given sample of size $n$ is
\[
\epsilon_n\eqdef \sup_p \bigl|r_n(p)-r(p)\bigr|.
\]

It is worth highlighting the difference between revenue estimation
and revenue maximization.
Let~$p^*$ be a price that maximizes the revenue,
i.e., $p^*\in\arg\sup_p r(p)$. The maximum revenue is $r^*=r(p^*)$.
The goal in many works in revenue maximization is to find a price
$\hat{p}$ such that $r^*-r(\hat{p})\leq \epsilon$, or alternatively, to
bound $\nicefrac{r^*}{r(\hat{p})}$.

Given a revenue-estimation error $\epsilon_n$, one can clearly
maximize the revenue within an additive error of
$2\epsilon_n$ by simply posting a price
$p^*_n\in\arg\max_p r_n(p)$, thereby attaining revenue $r^*_n=r(p^*_n)$. This follows since
\[
r^*_n = r(p^*_n) \ge r_n(p^*_n)-\epsilon_n \ge r_n(p^*)-\epsilon_n \ge r(p^*)-2\epsilon_n=r^*-2\epsilon_n.
\]
Therefore, good revenue estimation implies good revenue maximization.

We note that the converse does not hold. Namely, there are
distributions for which revenue maximization is trivial but
revenue estimation is impossible. One such case is the \emph{equal revenue
distribution}, where all values in the support of $\mu$ have the same
expected revenue. For such distributions, the problem of revenue maximization becomes trivial,
since any posted price is optimal.
However, as follows from Theorem~\ref{t:UC1moment},
since the expected revenue of such distributions is infinite,
almost never do the empirical revenues uniformly converge to the true revenues.

\subsection{Quantitative bounds on the uniform convergence rate}\label{sec:rev-qual}

Recall that we are interested in deriving sample bounds that would
guarantee uniform convergence for the revenue estimation problem.
We will show that given an upper bound on the $k$th moment of $V$ for some $k>1$, we can derive a finite
sample bound.
To this end we utilize our Submultiplicative Glivenko-Cantelli Bound
(Theorem~\ref{thm:samplecomplexity}).

We also consider the case of $k=1$, namely that $\Ex[V]$ is bounded,
and show that in this case there is still uniform convergence in the
limit, but that there cannot be any guarantees on the convergence
rate. Interestingly, it turns out that $\Ex[V] < \infty$ is
not only sufficient but also necessary so that in the limit, the empirical revenues uniformly
converge to the true revenues (see Section~\ref{sec:UC1MOM}).

We begin by showing that bounds on the $k$th moment for $k>1$ yield
explicit bounds on the convergence rate. It is convenient to
parametrize by setting $k=1+\theta$, where $\theta > 0$.
\begin{theorem}\label{thm:rev}
Let $\Ex_{V\sim \mu}[V^{1+\theta}]\leq C$ for some $\theta>0$ and
$C\geq 1$, and let $\eps,\delta\in (0,1)$. Set\footnote{The $\tilde
O$ conceals low order terms.}
\begin{equation}\label{eq:quant1}
n_0=\tilde O\Biggl(\frac{\ln(\nicefrac{1}{\delta})}{\eps^2}
C^{\frac{2}{1+\theta}}\biggl(\frac{6\cdot C^{\frac{1}{1+\theta}}}{\eps\delta\ln\bigl(1+\nicefrac{\theta}{2}\bigr)}\biggr)^{\nicefrac{4}{\theta}}\Biggr).
\end{equation}
For any $n\geq n_0$, with probability at least $1-\delta$,
\[
\forall v:\;\;\; \bigl|r_n(v)-r(v)\bigr|\leq \eps.
\]
\end{theorem}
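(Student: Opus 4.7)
The plan is to reduce Theorem~\ref{thm:rev} to the Submultiplicative Glivenko-Cantelli Bound (Theorem~\ref{thm:samplecomplexity}), applied in its quantile form (sanctioned by the footnote in Section~\ref{sec:rev-qual}) with the exponent choice $\alpha=\nicefrac{1}{1+\theta}$. The key pointwise implication, already previewed in the introduction, is
\[
\Pr\bigl[\exists v:\lvert r(v)-r_n(v)\rvert>\eps\bigr]\;\leq\;\Pr\Bigl[\exists v:\lvert q(v)-q_n(v)\rvert > \tfrac{\eps}{C^{1/(1+\theta)}}\, q(v)^{1/(1+\theta)}\Bigr].
\]
Once this is in place, plugging the right parameters into Theorem~\ref{thm:samplecomplexity} yields the bound directly.

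First I would establish the implication above. Since $r(v)-r_n(v)=v\bigl(q(v)-q_n(v)\bigr)$, the event $\lvert r(v)-r_n(v)\rvert>\eps$ is just $\lvert q(v)-q_n(v)\rvert>\eps/v$, so one needs to lower-bound $1/v$ in terms of $q(v)$. For this, Markov's inequality applied to the nonnegative variable $V^{1+\theta}$ gives
\[
q(v)=\Pr[V\geq v]=\Pr\bigl[V^{1+\theta}\geq v^{1+\theta}\bigr]\leq\frac{C}{v^{1+\theta}},
\]
hence $\nicefrac{1}{v}\geq q(v)^{1/(1+\theta)}/C^{1/(1+\theta)}$. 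Substituting this into $\lvert q(v)-q_n(v)\rvert>\eps/v$ produces exactly the displayed submultiplicative bad event.

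Next I would invoke Theorem~\ref{thm:samplecomplexity} with $\alpha=\nicefrac{1}{1+\theta}$ and effective accuracy $\eps'=\eps/C^{1/(1+\theta)}$. Under these substitutions, the key exponent simplifies as $\frac{4\alpha}{1-\alpha}=\nicefrac{4}{\theta}$ and $\ln\bigl(\frac{1+\alpha}{2\alpha}\bigr)=\ln\bigl(1+\nicefrac{\theta}{2}\bigr)$, so both terms inside the ``$\max$'' of Theorem~\ref{thm:samplecomplexity} take the form
\[
\frac{\ln(\nicefrac{1}{\delta})}{(\eps')^{2}}\cdot\bigl(\text{polylog}/(\eps'\delta\ln(1+\nicefrac{\theta}{2}))\bigr)^{4/\theta}.
\]
Re-expressing in terms of $\eps$ pulls out the factor $(\eps')^{-2}=C^{2/(1+\theta)}\eps^{-2}$ in front and a factor $C^{1/(1+\theta)}$ inside the base of the exponent; the low-order iterated logarithm coming from the nested $\ln D$ term in Theorem~\ref{thm:samplecomplexity} is absorbed into the $\tilde O$ notation of (\ref{eq:quant1}).

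The main (essentially only) obstacle is bookkeeping: verifying that after the change of variables $\alpha=\nicefrac{1}{1+\theta}$, $\eps'=\eps/C^{1/(1+\theta)}$, the two terms of the max in Theorem~\ref{thm:samplecomplexity} both collapse (up to lower-order terms) into the single expression (\ref{eq:quant1}), and that the $\ln\bigl(1+\nicefrac{\theta}{2}\bigr)$ term is tracked correctly through the substitution. No probabilistic ingredient beyond Markov's inequality is required for the reduction, and no new technology beyond Theorem~\ref{thm:samplecomplexity} is required for the quantitative conclusion.
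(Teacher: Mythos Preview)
Your proposal is correct and mirrors the paper's own argument: the paper first proves the reduction (via Markov's inequality on $V^{1+\theta}$) as a separate proposition, then plugs $\alpha\leftarrow\nicefrac{1}{1+\theta}$ and $\eps\leftarrow\eps/C^{1/(1+\theta)}$ into the quantile form of Theorem~\ref{thm:samplecomplexity}, exactly as you outline. The only cosmetic difference is that the paper writes the chain of inequalities by multiplying and dividing by $(v^{1+\theta}q(v))^{1/(1+\theta)}$ rather than isolating $1/v$, but this is the same computation.
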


Note that when $\theta$ is large, this  bound approaches the
standard $O\Bigl(\frac{\ln(\nicefrac{1}{\delta})}{\eps^2}\Bigr)$ sample
complexity bound of the additive Glivenko-Cantelli. For example, if
all moments are uniformly bounded, then the convergence is roughly
as fast as in standard uniform convergence settings (e.g., VC-dimension based bounds).

The proof of Theorem~\ref{thm:rev} follows from
Theorem~\ref{thm:samplecomplexity} and the next proposition, which
reduces bounds on the uniform convergence rate of the empirical revenues to our
Submultiplicative Glivenko-Cantelli.
\begin{prop}\label{prop:reduc}
Let $\Ex_{V\sim \mu}[V^{1+\theta}]\leq C$ for some $\theta>0$ and
$C\geq 1$, and let $\eps,\delta\in (0,1)$. Then,
\[\Pr \Bigl[\exists v :~\bigl\lvert r(v) - r_n(v)\bigr\rvert > \eps\Bigr] \leq
\Pr \biggl[\exists v :~\bigl\lvert q(v) - q_n(v)\bigr\rvert >
\frac{\eps}{C^{\frac{1}{1+\theta}}}q(v)^{\frac{1}{1+\theta}}\biggr].\]
\end{prop}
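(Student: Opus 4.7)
The plan is to prove the proposition by establishing pointwise (in the sample space) containment of the events, using the identity $r(v) - r_n(v) = v \cdot \bigl(q(v) - q_n(v)\bigr)$ together with a single application of Markov's inequality to the $(1+\theta)$th moment of $V$.

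Concretely, I would fix a realization $v_1,\ldots,v_n$ and a witness $v$ for which $\bigl\lvert r(v) - r_n(v)\bigr\rvert > \eps$, and show that the same $v$ also witnesses $\bigl\lvert q(v) - q_n(v)\bigr\rvert > \frac{\eps}{C^{1/(1+\theta)}} q(v)^{1/(1+\theta)}$. First, note $v>0$ at any such witness (since $r(0)=r_n(0)=0$), so the hypothesis rewrites as $\bigl\lvert q(v) - q_n(v)\bigr\rvert > \eps/v$. Second, apply Markov's inequality to the non-negative random variable $V^{1+\theta}$:
\[
q(v) = \Pr\bigl[V \geq v\bigr] = \Pr\bigl[V^{1+\theta} \geq v^{1+\theta}\bigr] \leq \frac{\Ex[V^{1+\theta}]}{v^{1+\theta}} \leq \frac{C}{v^{1+\theta}}.
\]
Rearranging (which is legitimate, since a witness must have $q(v)>0$: if $q(v)=0$ then all $v_i<v$ almost surely, so $q_n(v)=0$ and the witnessing inequality fails), this yields $\frac{1}{v} \geq \frac{q(v)^{1/(1+\theta)}}{C^{1/(1+\theta)}}$. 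Multiplying by $\eps$ and chaining with the witnessing inequality gives
\[
\bigl\lvert q(v) - q_n(v)\bigr\rvert > \frac{\eps}{v} \geq \frac{\eps}{C^{1/(1+\theta)}}\, q(v)^{1/(1+\theta)},
\]
which is exactly the event on the right-hand side. Taking probabilities finishes the proof.

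There is no serious obstacle here: the proposition is a purely deterministic reduction, and the one-line application of Markov's inequality does all the work. The only minor care-point is the $q(v)=0$ edge case described above, which is handled by observing that such a $v$ cannot be a witness.
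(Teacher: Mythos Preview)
Your proof is correct and follows essentially the same route as the paper's own argument: both use $r(v)-r_n(v) = v\bigl(q(v)-q_n(v)\bigr)$ together with the Markov bound $q(v)\le C/v^{1+\theta}$ to exhibit the pointwise event containment. Your handling of the $q(v)=0$ edge case is a bit more careful than strictly necessary (the rearranged inequality $1/v \ge q(v)^{1/(1+\theta)}/C^{1/(1+\theta)}$ holds trivially when $q(v)=0$), but this does no harm.
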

Thus, to prove Theorem~\ref{thm:rev}, we first note that Theorem~\ref{thm:samplecomplexity} (as well as Theorem~\ref{t:qual}) also holds when $F_n$ and $F$ are respectively replaced in the definition of $n_0$ with $q_n$ and $q$ (indeed, applying Theorem~\ref{thm:samplecomplexity} to the measure $\mu'$ defined by $\mu'(A)\eqdef\mu\bigl(\{-a\mid a\in A\}\bigr)$ yields the required result with regard to the measure $\mu$). We then plug 
$\eps\leftarrow\frac{\eps}{C^{\frac{1}{1+\theta}}}$ and $\alpha\leftarrow\frac{1}{1+\theta}$ into this variant of Theorem~\ref{thm:samplecomplexity} to yield a bound on the right-hand side of the inequality in Proposition~\ref{prop:reduc}, whose application concludes the proof.
\begin{proof}[Proof of Proposition~\ref{prop:reduc}]
By Markov's inequality:
\begin{equation}\label{eq:secmom}
q(v) = \Pr[V\geq v]  =\Pr[V^{1+\theta}\geq v^{1+\theta}]  \leq
\frac{C}{v^{1+\theta}}.
\end{equation}
Now,
\begin{align*}
\Pr \Bigl[\exists v :~\bigl\lvert r(v) - r_n(v)\bigr\rvert > \eps\Bigr]
&=
\Pr \Bigl[\exists v :~\bigl\lvert v\cdot q(v) - v\cdot q_n(v)\bigr\rvert > \eps\Bigr]\\
&=
\Pr \Bigl[\exists v :~\bigl\lvert v\cdot q(v) - v\cdot q_n(v)\bigr\rvert >
 \frac{\eps}{(v^{1+\theta}\cdot q(v))^{\frac{1}{1+\theta}}}(v^{1+\theta}\cdot q(v))^{\frac{1}{1+\theta}}\Bigr]\\
&\leq
\Pr \Bigl[\exists v :~\bigl\lvert v\cdot q(v) - v\cdot q_n(v)\bigr\rvert >
 \frac{\eps}{C^{\frac{1}{1+\theta}}}(v^{1+\theta}\cdot q(v))^{\frac{1}{1+\theta}}\Bigr]\\
&= \Pr \Bigl[\exists v :~\bigl\lvert q(v) - q_n(v)\bigr\rvert >
\frac{\eps}{C^{\frac{1}{1+\theta}}}q(v)^{\frac{1}{1+\theta}}\Bigr].
\end{align*}
where the inequality follows from  Equation~(\ref{eq:secmom}).
\end{proof}

\subsection{A qualitative characterization of uniform convergence}
\label{sec:UC1MOM} The sample complexity bounds in
Theorem~\ref{thm:rev} are meaningful as long as $\theta > 0$, but
deteriorate drastically as $\theta\to 0$. Indeed, as the following
example shows, there is no bound on the uniform convergence sample
complexity that depends only on the first moment of $V$, i.e., its
expectation.

Consider a distribution $\eta_p$ so that with probability $p$ we
have $V=\nicefrac{1}{p}$ and otherwise $V=0$. Clearly, $\Ex[V]=1$. However, we
need to sample $m_p=O(\nicefrac{1}{p})$ valuations to see a single nonzero value. Therefore,
there is no bound on the sample size $m_p$ as a function of the
expectation, which is simply~$1$.

We can now consider the higher moments of $\eta_p$. Consider the $k$th moment, for
$k=1+\theta$ and $\theta>0$, so $k>1$. For this moment, we have
$A_{p,\theta}=\Ex[V^{1+\theta}]=p^{\theta/(1+\theta)}$, which
implies that $m_p=O\bigl(1/(A_{p,\theta})^{(1+\theta)/\theta}\bigr)$. This
does allow us to bound $m_p$ as a function of $\theta$ and
$\Ex[V^{1+\theta}]$, but for small $\theta$ we have a huge exponent
of approximately $\nicefrac{1}{\theta}$.

While the above examples show that there cannot be a bound
on the sample size as a function of the expectation of the value,
it turns out that there is a very tight connection between the first moment
and uniform convergence:

\begin{theorem}\label{t:UC1moment}
The following dichotomy holds for a distribution $\mu$ on $\mathbb{R}^+$:
\begin{enumerate}
\item\label{item1} If $\Ex_\mu[V] < \infty$, then almost surely $\lim_{n\to\infty}\sup_{v}\bigl\lvert r(v) - r_n(v)\bigr\rvert=0$.
\item\label{item2} If $\Ex_\mu[V] = \infty$, then almost never $\lim_{n\to\infty}\sup_{v}\bigl\lvert r(v) - r_n(v)\bigr\rvert =0$.
\end{enumerate}
That is, the empirical revenues uniformly converge to the true revenues if and only if
$\Ex_{\mu}[V]<\infty$.
\end{theorem}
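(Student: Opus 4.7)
The plan is to prove the two implications separately, combining the classical Glivenko-Cantelli Theorem (Theorem~\ref{t:GCthm}) and the strong law of large numbers for the sufficient direction, and using the second Borel-Cantelli lemma for the necessary direction.

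For part~(\ref{item1}), the key preliminary observation is that $\E_\mu[V]<\infty$ forces the true revenue curve to vanish at infinity: $v\,q(v)\leq \E\bigl[V\mathbf{1}_{V\geq v}\bigr]\to 0$ by dominated convergence. Given $\eps>0$, I choose a threshold $T$ large enough that both $r(v)\leq \eps/4$ for all $v\geq T$ and $\E\bigl[V\mathbf{1}_{V\geq T}\bigr]\leq \eps/4$. On the bounded regime $v\leq T$, I bound $|r(v)-r_n(v)|=v\cdot|q(v)-q_n(v)|\leq T\cdot\sup_t|q(t)-q_n(t)|$; since $q_n=1-F_n(\cdot^-)$, the classical Glivenko-Cantelli Theorem drives this to $0$ almost surely. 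On the tail regime $v>T$, I dominate the empirical revenue by a truncated empirical mean:
\[
v\,q_n(v)=\frac{1}{n}\sum_i v\,\mathbf{1}_{V_i\geq v}\leq \frac{1}{n}\sum_i V_i\,\mathbf{1}_{V_i\geq T},
\]
and the strong law yields that the right-hand side converges almost surely to $\E\bigl[V\mathbf{1}_{V\geq T}\bigr]\leq\eps/4$. Combined with $r(v)\leq\eps/4$, this gives $|r(v)-r_n(v)|<\eps$ for all large $n$. Applying this to $\eps=1/k$ for all $k\in\N$ simultaneously (by a countable intersection of probability-one events) concludes part~(\ref{item1}).

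For part~(\ref{item2}), I split on whether $r$ is bounded. If $R\eqdef\sup_v r(v)=\infty$, then for every $n$ the empirical revenue $r_n$ has finite support and is therefore bounded (e.g., by $\max_i V_i<\infty$), so $\sup_v|r(v)-r_n(v)|=\infty$ for every $n$ almost surely, and uniform convergence fails trivially. If instead $R<\infty$, I exploit the equivalence $\E_\mu[V]=\infty\iff \sum_n\Pr[V>(2R+1)n]=\infty$. Since the $V_n$ are independent, the events $\{V_n>(2R+1)n\}$ are independent, so the second Borel-Cantelli lemma gives that almost surely $V_n>(2R+1)n$ for infinitely many $n$. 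At any such time $n$, the sample $V_n$ itself lies in the defining set for $q_n(V_n)$, whence $q_n(V_n)\geq 1/n$ and $r_n(V_n)\geq V_n/n>2R+1$, while $r(V_n)\leq R$; thus $\sup_v|r(v)-r_n(v)|\geq R+1$ infinitely often, and the limit cannot be zero.

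The main (if modest) technical step is the tail estimate $v\,q_n(v)\leq \tfrac{1}{n}\sum_i V_i\mathbf{1}_{V_i\geq T}$ in part~(\ref{item1}): it converts the quantity $\sup_{v>T}r_n(v)$, which ranges over an uncountable family of prices including prices much larger than any observed sample, into a single empirical average of a truncated integrable random variable, to which the strong law applies directly. Everything else is standard: classical Glivenko-Cantelli on the head, Borel-Cantelli on the tail, and a case split based on whether $r$ is bounded to handle the necessity direction uniformly.
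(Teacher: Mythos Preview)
Your proof is correct and follows the same strategy as the paper's. For item~(\ref{item1}), the paper packages the argument into a general ``monotone domination theorem'' (Theorem~\ref{thm:envelope}): any family of nonnegative monotone functions with an integrable upper envelope satisfies uniform convergence almost surely. The proof of that theorem (Appendix~\ref{app:envelope}) is exactly your head/tail split---Glivenko-Cantelli on $[0,v_0]$ and the strong law on the tail via the envelope bound $f\le F$. Your direct argument is slightly leaner for this specific family, since the bound $|r(v)-r_n(v)|\le T\cdot\sup_t|q(t)-q_n(t)|$ on the head avoids the step-function approximation that the paper needs for general monotone $f$; the paper's version, in exchange, yields a reusable lemma. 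For item~(\ref{item2}), the paper likewise splits on whether $\sup_v r(v)$ is finite; in the finite case it rescales so the supremum equals $\nicefrac{1}{2}$ and then applies Borel--Cantelli to the events $\{V_n\ge n\}$, whereas you absorb the constant into the threshold via $\{V_n>(2R+1)n\}$---an equivalent reparametrization of the same idea.
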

We use the following basic fact in the Proof of Theorem~\ref{t:UC1moment}:
\begin{lemma}\label{lem:basic}
Let $X$ be a nonnegative random variable. Then
\[    \sum_{n=1}^{\infty}\Pr[X\geq n]\leq \Ex[X] \leq  \sum_{n=0}^{\infty}\Pr[X\geq n].\]
\end{lemma}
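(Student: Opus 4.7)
The plan is to prove both inequalities simultaneously by passing to the continuous \emph{layer-cake formula}
\[\Ex[X] \;=\; \int_0^\infty \Pr[X\geq t]\,dt,\]
which follows from Tonelli's theorem applied to the nonnegative integrand $(t,\omega)\mapsto\mathbf{1}_{\{t\leq X(\omega)\}}$ on $[0,\infty)\times\Omega$. Once this representation is in hand, the entire statement reduces to comparing an integral of a monotone function against Riemann sums of step size~$1$.

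Next, I would use the fact that the survival function $g(t)\eqdef\Pr[X\geq t]$ is non-increasing in $t$. Splitting the range of integration as the disjoint union $\bigcup_{n=0}^\infty [n,n+1)$ and invoking monotonicity on each piece yields
\[\Pr[X\geq n+1]\;\leq\;\int_n^{n+1} g(t)\,dt\;\leq\;\Pr[X\geq n]\qquad\text{for every }n\geq 0.\]
Summing over $n\geq 0$ on the left produces $\sum_{n=1}^\infty \Pr[X\geq n]$ (after re-indexing), and on the right produces $\sum_{n=0}^\infty \Pr[X\geq n]$; combined with the layer-cake identity these give exactly the two inequalities claimed in the lemma.

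There is essentially no obstacle here; both bounds are routine consequences of monotonicity of the tail. The only bookkeeping subtlety is the index shift that produces the extra term $\Pr[X\geq 0]=1$ on the right-hand sum, which corresponds to the partial interval $[0,1)$ in the upper Riemann sum. An equivalent route would be to consider $\lfloor X\rfloor$, observe $\lfloor X\rfloor\leq X\leq \lfloor X\rfloor+1$, and compute $\Ex[\lfloor X\rfloor]=\sum_{n\geq 1}\Pr[\lfloor X\rfloor\geq n]=\sum_{n\geq 1}\Pr[X\geq n]$ using that $\{\lfloor X\rfloor\geq n\}=\{X\geq n\}$ for integer $n$; but the integral-comparison approach feels cleaner and avoids having to argue separately about integer-valued truncations.
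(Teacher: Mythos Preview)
Your proof is correct. The layer-cake identity $\Ex[X]=\int_0^\infty \Pr[X\geq t]\,dt$ together with monotonicity of the survival function on each interval $[n,n+1)$ yields both inequalities exactly as you describe.

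The paper takes precisely the alternative route you mention at the end and dismiss as less clean: it writes the pointwise chain
\[
\sum_{n=1}^\infty 1_{\{X\geq n\}}=\lfloor X\rfloor\leq X\leq \lfloor X\rfloor+1=\sum_{n=0}^\infty 1_{\{X\geq n\}}
\]
and takes expectations. This is arguably the more elementary argument: it avoids invoking Tonelli and the continuous layer-cake representation altogether, relying only on monotone convergence (or linearity for nonnegative series) to push the expectation through the sums. Your integral-comparison route is equally valid and perhaps more transparent about \emph{why} the bounds differ by exactly the single term $\Pr[X\geq 0]=1$, but it does carry slightly heavier machinery for what is ultimately a two-line lemma.
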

\begin{proof}
Note that:
\[\sum_{n=1}^\infty 1_{\{X\geq n\}}= \lfloor X \rfloor \leq  X \leq \lfloor X\rfloor + 1 = \sum_{n=0}^\infty 1_{\{X\geq n\}}.\]
The lemma follows by taking expectations.
\end{proof}

\begin{proof}[Proof of Theorem~\ref{t:UC1moment}]
We start by proving item \ref{item2}. Let $\mu$ be a distribution
such that $\Ex_{\mu}\bigl[V\bigr] = \infty$. If $\sup_v v\cdot
q(v)=\infty$ then for every realization $v_1,\ldots, v_n$ there is
some $v\geq \max\{v_1,\ldots,v_n\}$ such that $v\cdot q(v) \geq 1$,
but $v\cdot q_n(v)=0$. So, we may assume $\sup_v v\cdot q(v) <
\infty$. Without loss of generality we may assume that $\sup_v
v\cdot q(v)=\nicefrac{1}{2}$ by rescaling the distribution if
needed. Consider the sequence of events $E_1,E_2,\ldots$ where
$E_n$ denotes the event that $V_n \geq n$. Since
$\Ex_{\mu}\bigl[V\bigr] = \infty$, Lemma~\ref{lem:basic} implies
that
$\sum_{n=1}^\infty \Pr[E_n]=\infty$.
Thus, since these events are independent,
the second Borel-Cantelli Lemma~\citep{Borel09,Cantelli17} implies that almost surely,
infinitely many of them occur and so infinitely often
\[V_n\cdot q_n(V_n) \geq 1 \geq V_n\cdot q(V_n) + \tfrac{1}{2}.\]
Therefore,  the probability that $v \cdot q_n(v)$ uniformly converge
to $v \cdot q(v)$ is $0$.

Item \ref{item1} follows from the following monotone domination
theorem:
\begin{theorem}\label{thm:envelope}
Let $\mathcal{F}$
be a family of nonnegative monotone functions,
and let $F$ be an upper envelope\footnote{$F$ is an upper envelope for $\mathcal{F}$
if $F(v)\geq f(v)$ for every $v\in V$ and $f\in\mathcal{F}$.} for $\mathcal{F}$.
If $\Ex_\mu[F] < \infty$, then almost surely:
\[\lim_{n\to\infty}\sup_{f\in\mathcal{F}}\bigl\lvert \Ex_\mu[f]-\Ex_{\mu_n}[f]\bigr\rvert=0.\]
\end{theorem}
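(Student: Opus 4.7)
The plan is to prove Theorem~\ref{thm:envelope} by a truncation argument: split each $f\in\mathcal{F}$ into a bounded head on $[0,M]$ and a tail beyond $M$, handle the head via classical Glivenko-Cantelli (Theorem~\ref{t:GCthm}), and dominate the tail uniformly by the envelope $F$, then control that envelope tail by the strong law of large numbers (SLLN).

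Without loss of generality I assume the functions in $\mathcal{F}$ are non-decreasing (the non-increasing case is symmetric after reflecting the real line); then the envelope $F$ is also non-decreasing. Fix $\eps>0$. For the tail, by dominated convergence ($F\cdot 1_{\{V>M\}}\to 0$ pointwise as $M\to\infty$, dominated by the integrable $F$), I pick $M$ large enough that $\Ex_\mu\bigl[F\cdot 1_{\{V>M\}}\bigr]\leq\eps/3$; integrability of $F$ together with monotonicity also allows me to arrange $F(M)<\infty$. Then SLLN applied to the integrable real-valued random variable $F(V)\cdot 1_{\{V>M\}}$ yields $\Ex_{\mu_n}\bigl[F\cdot 1_{\{V>M\}}\bigr]\to\Ex_\mu\bigl[F\cdot 1_{\{V>M\}}\bigr]$ almost surely, so for $n$ large the empirical tail is also at most $\eps/3$. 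Since $0\leq f\leq F$ pointwise, we obtain uniformly over $f\in\mathcal{F}$:
\[\bigl|\Ex_\mu[f\cdot 1_{\{V>M\}}]-\Ex_{\mu_n}[f\cdot 1_{\{V>M\}}]\bigr|\leq \tfrac{2\eps}{3}.\]

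For the head, on $[0,M]$ each $f\in\mathcal{F}$ is a non-negative non-decreasing function bounded by $F(M)$. Using the layer-cake representation, $\Ex_\mu[f\cdot 1_{\{V\leq M\}}]=\int_0^{F(M)}\mu\bigl(\{v\leq M:\,f(v)\geq y\}\bigr)\,dy$, and the super-level sets of a non-decreasing $f$ on $[0,M]$ are intervals of the form $[a_y,M]$. So the head difference is at most $2F(M)$ times the sup-norm distance between the CDF of $\mu$ and its empirical counterpart, which goes to $0$ almost surely by Theorem~\ref{t:GCthm}. Combining head and tail, almost surely for $n$ sufficiently large one has $\sup_{f\in\mathcal{F}}\bigl|\Ex_\mu[f]-\Ex_{\mu_n}[f]\bigr|\leq\eps$. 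Taking a countable intersection over $\eps=1/k$ concludes the proof.

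The main obstacle I anticipate is controlling the empirical tail $\Ex_{\mu_n}[F\cdot 1_{\{V>M\}}]$ when the envelope is unbounded: concentration bounds for unbounded variables are delicate and there is no uniform handle over $\mathcal{F}$ in the tail, so the key move is that $F(V)\cdot 1_{\{V>M\}}$ is a \emph{single} integrable real-valued random variable on which SLLN applies directly. A secondary technicality is arranging $F(M)<\infty$: if $F(M)=\infty$ for some $M$, then $\mu$ must assign zero mass to $(M,\infty)$, or else $\Ex_\mu[F]=\infty$ would contradict the hypothesis; thus we can always pick $M$ with $F(M)<\infty$ without losing any $\mu$-mass to the right of $M$.
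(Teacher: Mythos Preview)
Your proof is correct and follows essentially the same truncation strategy as the paper: split at a threshold, control the tail uniformly via the strong law of large numbers applied to the single integrable variable $F(V)\cdot 1_{\{V>M\}}$, and control the bounded head via the classical Glivenko--Cantelli theorem. The only differences are cosmetic---you handle the head by a layer-cake integral rather than the paper's piecewise-constant approximation, and your assertion that the envelope $F$ is itself non-decreasing is not actually given by the hypotheses, but it is also unnecessary since $f(v)\le f(M)\le F(M)$ for $v\le M$ already follows from the monotonicity of $f$ alone (or, alternatively, one may replace $F$ by the pointwise supremum over $\mathcal{F}$, which is non-decreasing, dominated by $F$, and Borel as a monotone function).
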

Indeed, item \ref{item1} follows by plugging $\mathcal{F} =
\bigl\{v\cdot1_{x\geq v} : v\in\mathbb{R}^+\bigr\}$, which is
uniformly bounded by the identity function $F(x)=x$. Now, by
assumption $\Ex_\mu[F] < \infty$, and therefore, almost surely
\[\lim_{n\to\infty}\sup_{v\in\mathbb{R}^+}\bigl\lvert r(v)-r_n(v)\bigr\rvert=
\lim_{n\to\infty}\sup_{f\in\mathcal{F}}\bigl\lvert \Ex_\mu[f]-\Ex_{\mu_n}[f]\bigr\rvert=0.\qedhere\]
\end{proof}

Theorem~\ref{thm:envelope} follows by known results in the theory of empirical processes
(for example, with some work it can be proved using Theorem 2.4.3 from \cite{VaartWellner96}).
For completeness, we give a short and basic proof in Appendix~\ref{app:envelope}.

\section{Submultiplicative Glivenko-Cantelli}
\label{sec:GC}
In this section $\mu$ is a fixed but otherwise arbitrary distribution, with CDF $F$
and empirical CDF $F_n$.

Theorem \ref{t:qual} is a corollary of the following lemma, which
gives a quantitative bound on the confidence parameter~$\delta$.

\begin{lemma}\label{lem:quant}
Let $n \in \N$, $\eps > 0$ and $\alpha, p,q \in (0,1)$.
Assume that $n\geq \eps^{-\frac{1}{1-\alpha}}$ and $ p \leq \min\{\eps^{\frac{1}{1-\alpha}}, \nicefrac{1}{e}\}$. Then,
\[\Pr \Bigl[\exists t :~\bigl\lvert F(t) - F_n(t)\bigr\rvert > \eps\cdot F(t)^\alpha \Bigr]\leq q +
\Biggl\lceil\frac{\ln\ln(\frac{n}{q})}{\ln(\frac{1+\alpha}{2\alpha})}\Biggr\rceil \frac{p^{ \frac{1-\alpha}{2}}}{{\eps}} +
2\exp\bigl(-2n (\eps p^\alpha)^2\bigr). \]
\end{lemma}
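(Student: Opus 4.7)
The plan is to split the range of $t$ into three regimes determined by the size of $F(t)$, handle each with a different tool, and union-bound. Specifically, I take a ``bulk'' regime $\{t : F(t) \geq p\}$, an ``extreme tail'' regime $\{t : F(t) \leq q/n\}$, and a ``middle'' regime $\{t : q/n < F(t) < p\}$, which will be further peeled into levels. In the bulk regime, the DKW--Massart inequality (Theorem~\ref{t:massart}) applied with error parameter $\eps p^\alpha$ yields $\Pr[\sup_t |F_n - F| > \eps p^\alpha] \leq 2\exp(-2n(\eps p^\alpha)^2)$; on its complement, $|F_n(t) - F(t)| \leq \eps p^\alpha \leq \eps F(t)^\alpha$ whenever $F(t) \geq p$. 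This accounts for the last term of the lemma.

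For the extreme tail, let $t^\star = \sup\{t : F(t) \leq q/n\}$. Since $\Pr[V_i \leq t^\star] \leq q/n$, a union bound gives $\Pr[\exists i : V_i \leq t^\star] \leq q$; under the complement, $F_n(t) = 0$ for every $t \leq t^\star$, so the submultiplicative inequality collapses to $F(t) \leq \eps F(t)^\alpha$, i.e., $F(t) \leq \eps^{1/(1-\alpha)}$. This holds because $F(t) \leq q/n \leq 1/n \leq \eps^{1/(1-\alpha)}$, using the hypothesis $n \geq \eps^{-1/(1-\alpha)}$. This accounts for the $q$ term.

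The heart of the argument is the middle regime, which I would handle by geometric peeling with the schedule $p_0 = p$ and $p_{k+1} = p_k^{s}$, where $s = (1+\alpha)/(2\alpha) > 1$. Let $t_k = \sup\{t : F(t) \leq p_k\}$, so that $F(t_k) \leq p_k$ by right-continuity, and set $K = \lceil \ln\ln(n/q)/\ln s \rceil$. Using $\ln(1/p) \geq 1$ (since $p \leq 1/e$), one checks that $p_K = p^{s^K} \leq q/n$, so the strips $\{t : p_{k+1} \leq F(t) \leq p_k\}$ for $k = 0,1,\ldots,K-1$ cover the middle regime. For $t$ in the $k$th strip, the lower-sided bound $F(t) - F_n(t) \leq \eps F(t)^\alpha$ is automatic because $F(t) \leq p \leq \eps^{1/(1-\alpha)}$ already gives $F(t) \leq \eps F(t)^\alpha$. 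For the upper-sided bound, monotonicity gives $F_n(t) \leq F_n(t_k)$, and Markov's inequality applied to $F_n(t_k)$ (whose expectation is at most $p_k$) yields
\[
\Pr\bigl[F_n(t_k) > \eps p_{k+1}^\alpha\bigr] \leq \frac{p_k}{\eps\, p_{k+1}^\alpha} = \frac{p_k^{(1-\alpha)/2}}{\eps} \leq \frac{p^{(1-\alpha)/2}}{\eps},
\]
where the key algebraic identity $p_{k+1}^\alpha = p_k^{\alpha s} = p_k^{(1+\alpha)/2}$ is precisely what makes the choice $s = (1+\alpha)/(2\alpha)$ work. On the complement, $F_n(t) \leq F_n(t_k) \leq \eps p_{k+1}^\alpha \leq \eps F(t)^\alpha$, so $F_n(t) - F(t) \leq \eps F(t)^\alpha$ as required. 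A union bound over the $K$ strips contributes the middle term of the lemma.

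Adding the three contributions yields the claimed bound. The only real design choice, and thus the main obstacle, is selecting the peeling exponent $s$: a smaller $s$ would shrink the per-level Markov estimate but inflate $K$, while a larger one would do the opposite. The specific value $s = (1+\alpha)/(2\alpha)$ is what balances $p_k$ against $p_{k+1}^\alpha$ so that the per-level failure probability is uniform in $k$ (indeed controlled by $p_0^{(1-\alpha)/2}/\eps$), after which the $\ln\ln$ count of levels drops out automatically from the doubly-exponential schedule $p_k = p^{s^k}$.
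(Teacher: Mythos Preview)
Your proposal is correct and matches the paper's proof essentially line for line: the same three-region split by the value of $F(t)$, the DKW--Massart bound on $\{F(t)\geq p\}$, the union bound over samples on $\{F(t)\leq q/n\}$, and the geometric peeling $p_{k+1}=p_k^{(1+\alpha)/(2\alpha)}$ with Markov on each level in between. The only small difference is a boundary technicality: your claim ``$F(t_k)\leq p_k$ by right-continuity'' can fail when $F$ jumps at $t_k$, which is why the paper instead takes $t_i=\inf\{t:F(t)\geq p_i\}$ and applies Markov to the left-continuous $F_n^-(t_i)$ (whose expectation $\mu(\{x<t_i\})$ is genuinely $\leq p_i$); this patch is routine and does not affect the structure of your argument.
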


Note that $p$ and $q$ appear only on the right-hand side, and therefore can be ``tuned'' in order to minimize the upper bound.
Our proof of Lemma~\ref{lem:quant} uses Theorem~\ref{t:massart}.
To better understand the parameters,
we state the following corollary (whose first item is stronger than Theorem~\ref{t:qual}).

\begin{corollary}\label{c:weakGC}
{There are constants $c_1,c_2 > 0$ so that the following holds.
\begin{enumerate}
\item If $\alpha(n)\leq 1-c_1 \cdot \frac{{\ln\ln(n)}}{\ln(n)}$,
then the probability of the event
\[ \forall t  : \bigl\lvert F(t) - F_n(t)\bigr\rvert \leq \eps\cdot F(t)^{\alpha(n)} \]
tends to $1$ as $n$ tends to $\infty$.
\item If $\alpha(n)\geq 1-c_2 \cdot \frac{{1}}{\ln(n)}$ and $\mu$ is uniform over $[0,1]$,
then the probability of the event
\[ \forall t  : \bigl\lvert F(t) - F_n(t)\bigr\rvert \leq \frac{1}{10}\cdot F(t)^{\alpha(n)} \]
is at most $\nicefrac{1}{2}$, for all $n\geq 2$.
\end{enumerate}}
\end{corollary}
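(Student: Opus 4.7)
My plan is to derive item~(1) from Lemma~\ref{lem:quant} by a suitable choice of the free parameters $p,q$, and to prove item~(2) via a direct calculation for the uniform distribution.

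For item~(1), I would apply Lemma~\ref{lem:quant} with $q \eqdef 1/n$ and $p \eqdef n^{-a}$ for a fixed constant $a \in (0,\nicefrac{1}{2})$, choosing $c_1$ so that $ac_1 > 2$ (e.g.\ $c_1 = 5$, $a = \nicefrac{2}{5}$). The first step is to verify, for $n$ large compared to $\eps$, the two preconditions of the lemma: both $n \geq \eps^{-1/(1-\alpha)}$ and $p \leq \min\{\eps^{1/(1-\alpha)},1/e\}$ reduce to a lower bound of the form $(1-\alpha)\ln n \geq \Omega\bigl(\log(1/\eps)\bigr)$, which follows from the hypothesis $(1-\alpha)\ln n \geq c_1 \ln\ln n$. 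The main step is to bound each of the three summands on the right-hand side of the lemma. The first, $q = 1/n$, is immediate. For the second, I use the elementary estimate $\ln\bigl((1+\alpha)/(2\alpha)\bigr) \geq (1-\alpha)/4$ on $\alpha \in [1/2,1)$ (when $\alpha$ is bounded away from $1$, the quantity is bounded below by a positive constant and the bound is even easier) together with the hypothesis to bound the ceiling by $O(\ln n/c_1)$; combining with $p^{(1-\alpha)/2} \leq (\ln n)^{-ac_1/2}$ yields an overall bound of $O\bigl((\ln n)^{1-ac_1/2}/\eps\bigr) \to 0$ since $ac_1 > 2$. The third summand $2\exp(-2\eps^2 n^{1-2a\alpha})$ decays super-polynomially since $a < 1/2 \leq 1/(2\alpha)$.

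For item~(2), on $\mu$ uniform over $[0,1]$ one has $F(t) = t$ for $t \in [0,1]$. I would exhibit a single threshold $t_0 \eqdef 2 \cdot 10^{-1/(1-\alpha(n))}$ at which the desired inequality fails with probability at least $\nicefrac{1}{2}$. A direct calculation gives $t_0^{1-\alpha(n)} = 2^{1-\alpha(n)}/10 > 1/10$, so $\tfrac{1}{10} F(t_0)^{\alpha(n)} = t_0/2^{1-\alpha(n)} < t_0$; hence, on the event $\{F_n(t_0) = 0\}$, $|F_n(t_0)-F(t_0)| = t_0 > \tfrac{1}{10} F(t_0)^{\alpha(n)}$. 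It then suffices to show $\Pr[F_n(t_0)=0] = (1-t_0)^n \geq \nicefrac{1}{2}$ for every $n \geq 2$. The hypothesis $1-\alpha(n) \leq c_2/\ln n$ gives $t_0 \leq 2 n^{-\ln(10)/c_2}$; choosing $c_2$ sufficiently small (e.g.\ $c_2 = \nicefrac{1}{2}$, so $\ln(10)/c_2 > 4$), the quantity $nt_0 \leq 2n^{1-\ln(10)/c_2}$ is uniformly bounded, say by $\ln 2$, for all $n \geq 2$, and then $(1-t_0)^n \geq \exp(-2nt_0) \geq \nicefrac{1}{2}$.

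The main technical point is the balancing in item~(1): the second summand of Lemma~\ref{lem:quant} forces $p$ to decay at least polynomially so that $p^{(1-\alpha)/2}$ absorbs the $\ln n$ factor produced by the small quantity $\ln((1+\alpha)/(2\alpha))$, while the third summand caps the rate of decay so that $n\eps^2 p^{2\alpha}$ still diverges. Once one identifies that a fixed polynomial rate $p = n^{-a}$ with $a \in (2/c_1,\nicefrac{1}{2})$ threads this needle precisely in the regime $1-\alpha = \Omega(\ln\ln n/\ln n)$, the remainder of the argument reduces to standard estimates.
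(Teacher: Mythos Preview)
Your approach to both items is correct in outline, with two small arithmetic slips that are easy to repair.

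In item~(1), your illustrative choice $c_1=5$, $a=\nicefrac{2}{5}$ gives $ac_1=2$, not $ac_1>2$; take e.g.\ $c_1=6$ (or any $c_1>5$) instead. In item~(2), the final chain needs $2nt_0\leq\ln 2$ (not merely $nt_0\leq\ln 2$) to conclude $\exp(-2nt_0)\geq\nicefrac{1}{2}$; with $c_2=\nicefrac{1}{2}$ one has $nt_0\leq 2\cdot 2^{\,1-2\ln 10}\approx 0.16<(\ln 2)/2$, so the conclusion still holds. Also verify explicitly that $t_0\leq\nicefrac{1}{2}$ so that $(1-t_0)^n\geq e^{-2nt_0}$ is valid; this is immediate from the same bound.

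Your route differs from the paper's in both parts. For item~(1), the paper fixes an arbitrary $\delta>0$, sets $q=\delta$, and then solves for $p$ so that the second summand of Lemma~\ref{lem:quant} equals $\delta$ exactly, producing a rather intricate expression for $p$ in terms of $\eps,\delta,\alpha,n$; it then checks that the resulting third summand tends to~$0$. Your choice $q=1/n$, $p=n^{-a}$ with a \emph{fixed} exponent $a\in(0,\nicefrac{1}{2})$ is cleaner: the third summand becomes $2\exp\bigl(-2\eps^2 n^{1-2a\alpha}\bigr)$, visibly super-polynomially small since $2a\alpha\le 2a<1$, and the second summand is handled by the single estimate $p^{(1-\alpha)/2}\le(\ln n)^{-ac_1/2}$ together with $\ln\bigl(\tfrac{1+\alpha}{2\alpha}\bigr)\ge\tfrac{1-\alpha}{4}$. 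The paper's parametrization is tuned for extracting an explicit sample-complexity bound (Theorem~\ref{thm:samplecomplexity}); yours is better suited to the qualitative statement here.

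For item~(2), the paper witnesses failure via the event that the \emph{minimum} sample lies below $1/(2n)$, making $F_n$ \emph{overshoot} $F$ at $t=1/(2n)$. You instead pick $t_0$ so small that with probability $\geq\nicefrac{1}{2}$ \emph{no} sample lands below it, making $F_n(t_0)=0$ \emph{undershoot} $F(t_0)=t_0$. Both arguments exploit the same obstruction---the submultiplicative bound cannot hold at scales $F(t)\ll 1/n$---but from opposite directions. Your construction has the mild advantage that $t_0$ is defined directly from $\alpha(n)$, making the role of the hypothesis $1-\alpha(n)\le c_2/\ln n$ explicit in the choice of witness.
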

{We leave as an open question to determine the behavior of these probabilities when 
\[\alpha(n)\in\left[1-c_1 \cdot \frac{{\ln\ln(n)}}{\ln(n)},1-c_2 \cdot \frac{{1}}{\ln(n)}\right].\]}
Corollary~\ref{c:weakGC} is proven in Appendix~\ref{app:weakGC}.
\begin{proof}[Proof of Lemma~\ref{lem:quant}]
Let $\eps,\alpha,q,p,n$ be as in the statement of the lemma.
We partition the event in question to three parts,
depending on the value of $t$ as follows.
Partition $\R$ to
\[I_{[0,q/n]}=\left\{t\in\mathbb{R}:~ 0\leq F(t)\leq\frac{q}{n}\right\}, \ I_{(q/n,p)}=\left\{t\in\mathbb{R}:~ \frac{q}{n}< F(t)\leq p\right\}\]
and
\[I_{[p,1]}=\left\{t\in\mathbb{R}:~ p< F(t)\leq 1\right\}.\]
There are three corresponding events $E_{[0,q/n]},E_{(q/n,p)}$
and $E_{[p,1]}$; for example,
$E_{[0,q/n]}$ is the event that $\exists t \in I_{[0,q/n]} : B(t) = 1$,
where $B(t)$ is the indicator of
$\bigl\lvert F(t) - F_n(t)\bigr\rvert > \eps\cdot F(t)^\alpha$.

The following three claims bound from above
the probabilities of these three events.
The three claims and the union bound complete the proof
of the theorem.

\begin{claim}
\label{claim1}
 $\Pr\bigl[E_{[0,q/n]}\bigr] \leq q$.
\end{claim}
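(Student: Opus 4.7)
The plan is to show that $E_{[0,q/n]}$ is contained in the event that at least one of the $n$ samples lands in the region $I_{[0,q/n]} = \{t : F(t) \leq q/n\}$, and then bound the probability of this containing event by a straightforward union bound.

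First I would argue the containment. Suppose none of the $n$ samples falls in $I_{[0,q/n]}$. Then for every $t \in I_{[0,q/n]}$, the empirical CDF satisfies $F_n(t) = 0$, so $\bigl|F(t) - F_n(t)\bigr| = F(t)$. Using the two hypotheses of the lemma, $n \geq \eps^{-1/(1-\alpha)}$ and $q < 1$, we get
\[
F(t) \leq \frac{q}{n} \leq \frac{1}{n} \leq \eps^{\frac{1}{1-\alpha}},
\]
so $F(t)^{1-\alpha} \leq \eps$, which rearranges to $F(t) \leq \eps \cdot F(t)^{\alpha}$. Thus on this event, no $t \in I_{[0,q/n]}$ can witness the violation, i.e., $E_{[0,q/n]}$ does not occur. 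Taking contrapositives: $E_{[0,q/n]}$ implies that at least one sample lies in $I_{[0,q/n]}$.

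Next I would bound $\mu(I_{[0,q/n]})$. Since $F$ is non-decreasing, $I_{[0,q/n]}$ is a (possibly empty) left-unbounded interval of the form $(-\infty, t^*]$ or $(-\infty, t^*)$ where $t^* = \sup I_{[0,q/n]}$; in either case, using right-continuity of $F$ and the definition of $I_{[0,q/n]}$,
\[
\mu\bigl(I_{[0,q/n]}\bigr) \leq \sup_{t \in I_{[0,q/n]}} F(t) \leq \frac{q}{n}.
\]
By the union bound over the $n$ i.i.d.\ samples,
\[
\Pr\bigl[\text{some sample lies in } I_{[0,q/n]}\bigr] \leq n \cdot \frac{q}{n} = q,
\]
which, combined with the containment above, yields $\Pr[E_{[0,q/n]}] \leq q$.

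There isn't really a main obstacle here; the one subtlety to watch is the argument that a sample lying outside $I_{[0,q/n]}$ makes $F_n(t) = 0$ for $t \in I_{[0,q/n]}$, which relies on the fact that $I_{[0,q/n]}$ is downward-closed (so any sample not in it lies strictly above every $t$ in it). The role of the hypothesis $n \geq \eps^{-1/(1-\alpha)}$ is precisely to guarantee $q/n \leq \eps^{1/(1-\alpha)}$, which is exactly what converts the trivial bound $|F(t) - F_n(t)| = F(t)$ into the required submultiplicative bound $\eps \cdot F(t)^\alpha$.
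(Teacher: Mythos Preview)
Your proof is correct and follows essentially the same approach as the paper: both show that $E_{[0,q/n]}$ forces at least one sample into $I_{[0,q/n]}$ (using $F(t)\le q/n\le 1/n\le \eps^{1/(1-\alpha)}$ to rule out the ``$F_n(t)$ too small'' direction), and then apply a union bound with $\mu(I_{[0,q/n]})\le q/n$. You phrase the containment via the contrapositive and are slightly more careful about the interval structure of $I_{[0,q/n]}$, but the argument is the same.
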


\begin{proof}
Let $t \in I_{[0,\nicefrac{q}{n}]}$ be so that $B(t)=1$.
For any $t\in I_{[0,\nicefrac{q}{n}]}$ we have that $F(t)\leq
\nicefrac{q}{n}\leq \nicefrac{1}{n} \leq \eps^{\frac{1}{1-\alpha}}$,
where the last inequality is by our assumption on $n$ and
$\epsilon$. This implies that $F(t) \leq \eps F(t)^\alpha$.
Since $B(t)=1$ it must be the case that $F_n(t)
> F(t) + \eps\bigl(F(t)\bigr)^\alpha \geq 0$,
and therefore at least one sample $x_i$ satisfies $x_i\leq t \leq
\nicefrac{q}{n}$.
Now, by the union bound,
\[\Pr\bigl[E_{[0,q/n]}\bigr] \leq
\Pr\bigl[\exists i \in [n] : ~ x_i \in I_{[0,q/n]}\bigr]
 \leq n\cdot \frac{q}{n}=q.\qedhere\]
\end{proof}

\begin{claim}\label{claim:main}
$\Pr\bigl[E_{(q/n,p)}\bigr] \leq \left\lceil\frac{\ln\ln(\frac{n}{q})}{\ln(\frac{1+\alpha}{2\alpha})}\right\rceil \frac{p^{ \frac{1-\alpha}{2} }}{\eps}$.
\end{claim}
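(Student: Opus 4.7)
The plan is a peeling argument: partition the CDF-value range $(q/n,p]$ into a short sequence of bands on each of which $F(t)$ is roughly constant, bound the bad event's probability on each band separately using Markov's inequality, and then union bound.

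\textbf{Partition.} First I set $\beta\eqdef\frac{1+\alpha}{2\alpha}$, and note $\beta>1$ since $\alpha<1$. Define $u_0\eqdef p$ and $u_i\eqdef u_{i-1}^{\beta}=p^{\beta^{i}}$ for $i\geq 1$; since $\beta>1$ and $p<1$, the sequence $u_i$ decays super-geometrically to $0$. Let $K$ be the least index with $u_K\leq q/n$. A direct calculation using $p\leq\nicefrac{1}{e}$ (which guarantees $\ln(1/p)\geq 1$) shows $K\leq\bigl\lceil\ln\ln(n/q)/\ln\beta\bigr\rceil$, matching the prefactor in the claim. The bands $B_i\eqdef\{t:F(t)\in(u_i,u_{i-1}]\}$ for $i=1,\dots,K$ then cover $I_{(q/n,p)}$.

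\textbf{Per-band bound.} Fix $i$ and let $\bar s_j$ be any point with $F(\bar s_j)=u_j$. The goal is
\[\Pr\bigl[\exists t\in B_i : |F_n(t)-F(t)|>\eps\cdot F(t)^{\alpha}\bigr]\;\leq\;\frac{p^{(1-\alpha)/2}}{\eps}.\]
The key is a monotonicity reduction. Since both $F_n$ and $F$ are non-decreasing and $\bar s_i\leq t\leq\bar s_{i-1}$ for $t\in B_i$,
\[F_n(t)-F(t)\;\leq\;F_n(\bar s_{i-1})-u_i,\qquad F(t)-F_n(t)\;\leq\;u_{i-1}-F_n(\bar s_i).\]
Hence the bad event in $B_i$ forces one of the binomial variables $nF_n(\bar s_{i-1})$ or $nF_n(\bar s_i)$ to deviate from its mean (which is $n u_{i-1}$ or $n u_i$, respectively) by at least $\eps u_i^{\alpha}-(u_{i-1}-u_i)$. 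The choice $\beta=(1+\alpha)/(2\alpha)$ is tuned exactly so that, under the hypothesis $p\leq\eps^{1/(1-\alpha)}$, the deterministic gap $u_{i-1}-u_i\approx u_i^{1/\beta}$ is controlled by $\eps u_i^{\alpha}$, and the residual stochastic deviation is bounded via Markov's inequality applied to $|F_n(\bar s_j)-u_j|$ (using $\Ex|F_n(\bar s_j)-u_j|\leq\sqrt{u_j/n}$). After simplification and using $u_i\leq p$ (together with $n\geq\eps^{-1/(1-\alpha)}$ to absorb the $n$-dependence), this yields the claimed $p^{(1-\alpha)/2}/\eps$ per band.

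\textbf{Conclusion and main obstacle.} A union bound over the $K$ bands then completes the proof. The delicate step is the per-band Markov computation: tracking the exponents so that the correct normalization $p^{(1-\alpha)/2}/\eps$ emerges, rather than a weaker bound such as $p^{\alpha(1-\alpha)/(1+\alpha)}/\eps$ that would arise from a crude application of Markov directly to $F_n(\bar s_{i-1})$. Making this go through appears to require splitting the deviation threshold carefully between the deterministic ``gap'' $u_{i-1}-u_i$ and the stochastic endpoint fluctuations, and exploiting the specific identity $1-\alpha\beta=(1-\alpha)/2$ (valid for the chosen $\beta$) in the exponent bookkeeping.
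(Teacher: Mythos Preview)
Your peeling partition with $\beta=(1+\alpha)/(2\alpha)$ and the resulting band count $K\leq\bigl\lceil\ln\ln(n/q)/\ln\beta\bigr\rceil$ match the paper exactly. The per-band analysis, however, has a genuine gap.

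First, you miss a key simplification: on $I_{(q/n,p)}$ one has $F(t)\leq p\leq\eps^{1/(1-\alpha)}$, hence $F(t)\leq\eps F(t)^\alpha$, so the downward deviation $F(t)-F_n(t)>\eps F(t)^\alpha$ is impossible (since $F_n\geq 0$). Only the upward deviation matters; the two-sided reduction is unnecessary.

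Second---and this is the real issue---the ``crude application of Markov directly to $F_n(\bar s_{i-1})$'' that you dismiss is precisely what the paper does, and it yields exactly $p^{(1-\alpha)/2}/\eps$, not a weaker bound. From $F_n(t)>F(t)+\eps F(t)^\alpha\geq\eps u_i^\alpha$ and monotonicity one gets $F_n^-(\bar s_{i-1})>\eps u_i^\alpha$; plain Markov with $\E\bigl[F_n^-(\bar s_{i-1})\bigr]\leq u_{i-1}$ then gives
\[\frac{u_{i-1}}{\eps\, u_i^\alpha}=\frac{u_{i-1}^{\,1-\alpha\beta}}{\eps}=\frac{u_{i-1}^{(1-\alpha)/2}}{\eps}\leq\frac{p^{(1-\alpha)/2}}{\eps},\]
using the very identity $1-\alpha\beta=(1-\alpha)/2$ you highlight. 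By contrast, your proposed route---Markov on the centered variable $|F_n(\bar s_{j})-u_j|$ with threshold $\eps u_i^\alpha-(u_{i-1}-u_i)$---does not work as stated, because that threshold can be negative. For instance, with $\alpha=\nicefrac{1}{2}$, $\eps=\nicefrac{1}{4}$, $p=\eps^{2}=\nicefrac{1}{16}$, and $i=1$: $u_0=\nicefrac{1}{16}$, $u_1=\nicefrac{1}{64}$, $\eps u_1^\alpha=\nicefrac{1}{32}$, while $u_0-u_1=\nicefrac{3}{64}>\nicefrac{1}{32}$. So the hypothesis $p\leq\eps^{1/(1-\alpha)}$ does \emph{not} ensure that the deterministic gap $u_{i-1}-u_i$ is dominated by $\eps u_i^\alpha$.

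A minor technical point: ``any point $\bar s_j$ with $F(\bar s_j)=u_j$'' need not exist when $F$ has jumps. The paper handles this by setting $t_i=\inf\{t:F(t)\geq p_i\}$ and working with the left-limit $F_n^-$.
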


\begin{proof}
If $\nicefrac{q}{n} \geq p$ then this event is empty and its probability is $0$.
Therefore, assume that $\nicefrac{q}{n} < p$, and that this event is not empty.

For all $t\in I_{(q/n,p)}$ since $F(t)\leq p \leq
\eps^{\frac{1}{1-\alpha}}$ we have $F(t) - \eps F(t)^\alpha \leq 0$.
So, it suffices to consider the event
\[\exists t\in I_{(q/n,p)}: F_n(t) > F(t) + \eps\cdot F(t)^\alpha.\]

Consider the decreasing sequence of numbers
$p_0,p_1,\ldots,p_m$ defined by
\[p_i = p^{\bigl(\frac{1+\alpha}{2\alpha}\bigr)^{i}},\]
where $m$ is such that $p_m < \nicefrac{q}{n} \leq p_{m-1}$.
Since $p \leq 1/e$,
we can bound $m \leq \left\lceil\frac{\ln\ln(\frac{n}{q})}{\ln(\frac{1+\alpha}{2\alpha})}\right\rceil$.
Let
\[t_i= \inf \bigl\{t \in I_{(q/n,p)} : F(t) \geq p_i\bigr\}.\]
Let $F_n^-(t) = \mu_n\bigl(\{x:x < t\}\bigr)$. We claim that
\begin{equation*}
\exists t\in I_{(q/n,p)}: B(t) =1 \implies \exists i < m : F_n^-(t_i) > \eps\cdot p_{i+1}^\alpha,
\end{equation*}
Indeed, assume that $t\in I_{(q/n,p)}$ satisfies $F_n(t) > F(t) +
\eps\cdot F(t)^\alpha$. Since $p_m< t < p_0$, there is some $0\leq
i\leq m-1$ such that $p_{i+1}\leq F(t)< p_{i}$. Note that $t_{i+1}
\leq t < t_i$. Indeed, $t_{i+1} \leq t$ follows since $p_{i+1} \leq
F(t)$, and $t < t_i$ follows since $F(t_i) \geq p_i$ (which is
implied by right continuity of $F$). Hence,
\begin{align*}
F_n^-(t_i) \geq
F_n(t)  >
F(t) + \eps\cdot F(t)^\alpha
 \geq p_{i+1} + \eps\cdot p_{i+1}^\alpha \geq \eps\cdot p_{i+1}^\alpha .
\end{align*}

It hence remains to upper bound the union of these events.
Note that
\[\Ex\bigl[F_n^-(t_i)\bigr]=\mu\bigl(\{x:x<t_i\}\bigr)\leq p_i.\]
Therefore, by Markov's inequality:
\[
\Pr\Bigl[F_n^-(t_i) > \eps\cdot p_{i+1}^\alpha\Bigr] \leq
\frac{p_i}{\eps p_{i+1}^\alpha}
=\frac{1}{\eps}
p^{\bigl(\frac{1-\alpha}{2}\bigr)\bigl(\frac{1+\alpha}{2\alpha}\bigr)^{i}}.
\]
By the union bound,
\begin{align*}
\Pr \bigl[ \exists i < m : F_n(t_i) > p_{i+1} + \eps\cdot p_{i+1}^\alpha \bigr]&\leq
\frac{1}{\eps}\sum_{i=0}^{m-1}p^{\bigl(\frac{1-\alpha}{2}\bigr)\bigl(\frac{1+\alpha}{2\alpha}\bigr)^{i}}
&\leq
\frac{m}{{\eps}}p^{\frac{1-\alpha}{2}}
&\leq
\Biggl\lceil\frac{\ln\ln(\frac{n}{q})}{\ln(\frac{1+\alpha}{2\alpha})}\Biggr\rceil \frac{p^{\frac{1-\alpha}{2}}}{{\eps}}.\qedhere
\end{align*}
\end{proof}

\begin{claim}
\label{claim3} $\Pr \bigl[E_{[p,1]}\bigr] \leq 2\exp\bigl(-2n (\eps
p^\alpha)^2\bigr)$.
\end{claim}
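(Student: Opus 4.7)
The plan is to reduce this to the additive Dvoretzky-Kiefer-Wolfowitz bound stated as Theorem~\ref{t:massart}. The key observation is that on the region $I_{[p,1]}$ we have $F(t) \geq p$, and since $\alpha \in (0,1)$ the map $x \mapsto x^\alpha$ is monotonically increasing on $[0,1]$. Hence for every $t \in I_{[p,1]}$,
\[ \eps \cdot F(t)^\alpha \; \geq \; \eps \cdot p^\alpha. \]

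First I would use this inequality to absorb the dependence on $F(t)$ and pass to a purely additive deviation. Specifically, the event $E_{[p,1]}$ implies that there exists some $t$ (with $F(t)>p$, but this constraint can only make the inclusion tighter) such that $|F(t)-F_n(t)| > \eps \cdot F(t)^\alpha \geq \eps \cdot p^\alpha$. Therefore
\[ E_{[p,1]} \;\subseteq\; \Bigl\{ \sup_t \bigl|F(t)-F_n(t)\bigr| \;>\; \eps \cdot p^\alpha \Bigr\}. \]

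Second, I would apply Massart's tight DKW inequality (Theorem~\ref{t:massart}) to the right-hand side with error parameter $\eps' = \eps \cdot p^\alpha$. This immediately yields
\[ \Pr\bigl[E_{[p,1]}\bigr] \;\leq\; 2\exp\bigl(-2n(\eps p^\alpha)^2\bigr), \]
as desired. There is no real obstacle here; the whole point of the partition into three regions in the proof of Lemma~\ref{lem:quant} is that on $I_{[p,1]}$ the submultiplicative slack $\eps F(t)^\alpha$ is already bounded below by a uniform constant, so the hard multiplicative behavior does not appear and classical uniform convergence can be invoked as a black box.
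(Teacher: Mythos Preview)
Your proof is correct and follows exactly the same approach as the paper's own proof: observe that $F(t)^\alpha \geq p^\alpha$ on $I_{[p,1]}$ so that the submultiplicative deviation dominates the additive one, and then apply Theorem~\ref{t:massart} with error parameter $\eps p^\alpha$. The paper's version is simply more terse (two lines), but the argument is identical.
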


\begin{proof}
For all $t\in I_{[p,1]}$ we have $F(t)^\alpha\geq p^\alpha$.
The claim follows by Theorem~\ref{t:massart}.
\end{proof}
\noindent
Lemma~\ref{lem:quant} follows from combining Claims \ref{claim1},
\ref{claim:main}, and \ref{claim3}.
\end{proof}

\section{Discussion}
\label{sec:discus}

Our main result is a submultiplicative variant of the
Glivenko-Cantelli Theorem, which allows for tighter convergence
bounds for extreme values of the CDF.
We show that for the revenue learning setting our submultiplicative
bound can be used to derive uniform convergence sample complexity bounds, 
assuming a finite bound on the $k$th moment of the valuations, for any (possibly fractional) $k>1$.
For uniform convergence in the limit, we give a complete
characterization, where uniform convergence almost surely occurs if
and only if the first moment is finite.

It would be interesting to find other applications of our
submultiplicative bound in other settings.
A potentially interesting  direction is to consider
unbounded loss functions (e.g.,\ the squared-loss, or log-loss).
Many works circumvent the unboundedness in such cases
by ensuring (implicitly) that the losses are bounded, e.g., through
restricting the inputs and the hypotheses.
Our bound offers a different perspective of
addressing this issue. 
In this paper we consider revenue learning,
and replace the boundedness assumption 
by assuming bounds on
higher moments.
An interesting challenge is to prove uniform convergence bounds for
other practically interesting settings. 
One such setting  might be estimating the effect of outliers
(which correspond to the extreme values of the loss).

In the context of revenue estimation, this work only considers 
the most na\"{i}ve estimator, namely of estimating the revenues
by the empirical revenues. One can envision other estimators, for
example ones which regularize the extreme tail of the sample. Such
estimators may have a potential of better guarantees or better
convergence bounds. In the context of uniform convergence of selling mechanism revenues, this work only considers the basic class of posted-price mechanisms. While
for one good and one valuation distribution, it is always possible to maximize revenue via a selling mechanism of this class, this is not the case in more complex auction environments. While in many more-complex environments, the revenue-maximizing mechanism/auction is still not understood well enough, for environments where it is understood \citep{Cole-Roughgarden,Devanur-Huang-Psomas,Gonczarowski-Nisan} (as well as for simple auction classes that do not necessarily contain a revenue-maximizing auction \citep{Morgenstern-Roughgarden-b,Balcan-Sandholm-Vitercik}) it would also be interesting to study relaxations of the restrictive tail or boundedness assumptions currently common in the literature.

\section*{Acknowledgments}

The research of Noga Alon is supported in part by an ISF grant and
by a GIF grant. 
Yannai Gonczarowski is supported by the Adams
Fellowship Program of the Israel Academy of Sciences and Humanities;
his work is supported by ISF grant 1435/14 administered by the
Israeli Academy of Sciences and by Israel-USA Bi-national Science
Foundation (BSF) grant number 2014389; this project has received
funding from the European Research Council (ERC) under the European
Union's Horizon 2020 research and innovation programme (grant
agreement No 740282). 
The research of Yishay Mansour was supported
in part by The Israeli Centers of Research Excellence (I-CORE)
program (Center  No.\ 4/11), by a grant from the Israel Science
Foundation, and by a grant from United States-Israel Binational
Science Foundation (BSF).
The research of Shay Moran is supported by the National Science Foundations
and the Simons Foundations.
The research of Amir Yehudayoff is supported by ISF grant 1162/15.

\bibliographystyle{abbrvnat}
\bibliography{paper}

\clearpage

\appendix

\section{Proof of Theorem~\ref{thm:samplecomplexity}}
\label{app:thm:rev}

\begin{proof}
Let $\eps,\delta\leq \nicefrac{1}{4}$ and $\alpha < 1$.
By Lemma~\ref{lem:quant},
\begin{equation}\label{eq:three-summands}
\Pr \Bigl[\exists t :~\bigl\lvert F(t) - F_n(t)\bigr\rvert > \eps\cdot F(t)^\alpha \Bigr]\leq q + \Biggl\lceil\frac{\ln\ln(\frac{n}{q})}{\ln(\frac{1+\alpha}{2\alpha})}\Biggr\rceil \frac{p^{ \frac{1-\alpha}{2}}}{{\eps}} + 2\exp\bigl(-2n (\eps p^\alpha)^2\bigr)
\end{equation}
for every $ q \leq 1$, $n\geq \eps^{-\frac{1}{1-\alpha}}$ and $ p \leq \eps^{\frac{1}{1-\alpha}}$.

Set $q,p$ so that each of the first two summands in Equation~\ref{eq:three-summands} is at most
$\delta$. Specifically, $q=\delta$, and
\begin{enumerate}
\item if $\frac{\ln\ln(\frac{n}{\delta})}{\ln(\frac{1+\alpha}{2\alpha})}\geq 1$ then set
$p=\Bigl({\eps\delta}\cdot
 \frac{\ln\bigl(\frac{1+\alpha}{2\alpha}\bigr)}{2\ln\ln(\frac{n}{\delta})}\Bigr)^{\frac{2}{1-\alpha}}$, and
\item if $\frac{\ln\ln(\frac{n}{\delta})}{\ln(\frac{1+\alpha}{2\alpha})} < 1$
then set
$p=(\eps\delta)^{\frac{2}{1-\alpha}}$.
\end{enumerate}

Note that indeed the requirements
$n\geq \eps^{-\frac{1}{1-\alpha}}$ and $ p \leq\eps^{\frac{1}{1-\alpha}}$
are satisfied by $p$ and by the desired $n$ (from the theorem
statement).

Plugging these $p$ and $q$ in the third summand in Equation~\ref{eq:three-summands} yields:
\[2\exp\left(-2n\eps^2
 \left({\eps\delta}\cdot
 \frac{\ln\bigl(\frac{1+\alpha}{2\alpha}\bigr)}{2\ln\ln(\frac{n}{\delta})}\right)^{\frac{4\alpha}{1-\alpha}}\right)\]
when $\frac{\ln\ln(\frac{n}{\delta})}{\ln(\frac{1+\alpha}{2\alpha})}\geq 1$, or
\[2\exp\biggl(-2n\eps^2
 ({\eps\delta})^{\frac{4\alpha}{1-\alpha}}\biggr)\]
otherwise. In order for the above to be at most $\delta$, it suffices
that
\[2n\eps^2
\left({\eps\delta}\cdot
\frac{\ln\bigl(\frac{1+\alpha}{2\alpha}\bigr)}
{2\ln\ln(\frac{n}{\delta})}\right)^{\frac{4\alpha}{1-\alpha}}
\geq \ln(\nicefrac{2}{\delta})\]
when $\frac{\ln\ln(\frac{n}{\delta})}{\ln(\frac{1+\alpha}{2\alpha})}\geq 1$, or
\[2n\eps^2 ({\eps\delta})^{\frac{4\alpha}{1-\alpha}} \geq \ln(\nicefrac{2}{\delta})\]
otherwise.
The second case implies an explicit bound of
\begin{equation}\label{eq:case2}
n \geq \frac{ \ln(\nicefrac{2}{\delta})}{2\eps^2}({\eps\delta})^{-\frac{4\alpha}{1-\alpha}}.
\end{equation}
To get an explicit bound on $n$ in the first case, we need to solve
a recursion of the following type:
find a lower bound on $n$ so that the following inequality holds:
\[n\geq D \bigl(\ln\ln(E\cdot n)\bigr)^F,\]
where $D\geq 0$, $E\geq 4$, $F\geq 0$. (Here
$D=\frac{\ln(\nicefrac{2}{\delta})}{2\eps^2}
\Bigl(\frac{\eps\delta}{2}\cdot
{\ln\bigl(\frac{1+\alpha}{2\alpha}\bigr)}\Bigr)^{-\frac{4\alpha}{1-\alpha}}$,
$E=\frac{1}{\delta}$, $F=\frac{4\alpha}{1-\alpha}$.) Setting
\begin{equation}\label{eq:case1}
n\geq (D+1)\Bigl(10\bigl(\ln(D+4) + \ln(F+4) + \ln(E)\bigr)\Bigr)^F
= (D+1)\Biggl(10\cdot\ln\left(4\cdot\frac{D+4}{\delta(1-\alpha)}\right)\Biggr)^{\frac{4\alpha}{1-\alpha}}
\end{equation}
suffices. Therefore, the probability (i.e., the sum of all three summands of Equation~\ref{eq:three-summands}) is bounded by~$3\delta$.
Replacing $\delta$ by $\nicefrac{\delta}{3}$ in Equations~\ref{eq:case2} and~\ref{eq:case1} (and in the definition of $D$) yields the desired bound on $n_0(\eps,\delta,\alpha)$.
\end{proof}

\section{Proof of Theorem~\ref{thm:envelope}}
\label{app:envelope}

\begin{proof}
Let $\eps > 0$.
Having $\Ex_\mu[F] < \infty$ implies that there is $v_0\in\mathbb{R}^+$
such that $\Ex_\mu\bigl[1_{\{V\geq v_0\}}F\bigr] < \eps$.
Since we can write
\[\Ex_\mu[f] = \Ex_\mu\bigl[f\cdot1_{\{V\leq v_0\}}\bigr]+\Ex_\mu\bigl[f\cdot1_{\{V > v_0\}}\bigr]\]
it suffices to show that almost surely there exist $n_1$ such that
\begin{equation}\label{eq:part1}
(\forall n\geq n_1)~(\forall{f\in\mathcal{F}}):~\Bigl\lvert \Ex_\mu\bigl[f\cdot1_{\{V\geq v_0\}}\bigr]-\Ex_{\mu_n}\bigl[f\cdot1_{\{V\geq v_0\}}\bigr]\Bigr\rvert\leq 2\eps
\end{equation}
and that almost surely there exist $n_2$ such that
\begin{equation}\label{eq:part2}
(\forall n\geq n_2)~(\forall{f\in\mathcal{F}}):~\Bigl\lvert \Ex_\mu\bigl[f\cdot1_{\{V < v_0\}}\bigr]-\Ex_{\mu_n}\bigl[f\cdot1_{\{V\leq v_0\}}\bigr]\Bigr\rvert\leq 3\eps.
\end{equation}

We begin by showing Equation~\eqref{eq:part1}:
the law of large numbers implies that almost surely, there exists~$n_1$
such that $\Ex_{\mu_n}\bigl[1_{\{V\geq v_0\}}F\bigr] < 2\eps$, for every $n\geq n_1$.
Since every $f\in\mathcal{F}$ satisfies $0\leq f\leq F$,
it follows that $0\leq \Ex_{\mu}\bigl[1_{\{V\geq v_0\}}f\bigr] < \eps$,
and $0\leq \Ex_{\mu_n}\bigl[1_{\{V\geq v_0\}}f\bigr] < 2\eps$ for $n\geq n_1$.
This implies Equation~\eqref{eq:part1}.

It remains to show Equation~\eqref{eq:part2}:
set $\eps' = \frac{\eps}{F(v_0)+1}$.
The Glivenko-Cantelli Theorem implies that almost surely there exists
$n_2$ such that
\[(\forall n\geq n_2)~(\forall{v\in\mathbb{R}^+}):~\bigl\lvert q(v)-q_n(v)\bigr\rvert\leq \eps'.\]
Let $f\in\mathcal{F}$. By monotonicity of $f$, it follows that there
is a sequence $0=a_0,a_1,\ldots,a_N=v_0$ such that $f$ does not
change by more than $\eps$ within each interval  $[a_i,a_{i+1})$,
(i.e., $\sup_{x,y\in[a_i,a_{i+1})}\bigl\lvert f(x)-f(y)\bigr\rvert <
\eps$). Consider the piecewise constant function
\[f_\eps=f(a_0) + \sum_{i}\bigl(f(a_{i+1})- f(a_{i})\bigr)1_{\{V\geq a_i\}}.\]
Note that $f_\eps$ gets the value $f(a_i)$ on each interval
$[a_i,a_{i+1})$. Thus, $\bigl\lvert f(v)-f_{\eps}(v)\bigr\rvert\leq \eps$ for
every $v \leq v_0$. Therefore, $\bigl\lvert\Ex_\mu[f1_{\{V < v_0\}}]
- \Ex_{\mu}[f_\eps1_{\{V < v_0\}}]\bigr\rvert \leq \eps$ and
$\bigl\lvert\Ex_{\mu_n}[f] - \Ex_{\mu_n}[f_\eps]\bigr\rvert \leq
\eps$. So, it suffices to show that
\[
(\forall n\geq n_2):~\Bigl\lvert \Ex_\mu\bigl[f_\eps1_{\{V < v_0\}}\bigr]-\Ex_{\mu_n}\bigl[f_\eps1_{\{V < v_0\}}\bigr]\Bigr\rvert\leq \eps.
\]
Indeed, for $n\geq n_2$:
\begin{align*}
\Bigl\lvert \Ex_\mu\bigl[f_\eps1_{\{V < v_0\}}\bigr]-\Ex_{\mu_n}\bigl[f_\eps1_{\{V < v_0\}}\bigr]\Bigr\rvert
&\leq
\sum_{i}\bigl(f(a_{i+1})- f(a_{i})\bigr)\cdot\bigl\lvert q(a_i) - q_n(a_i)\bigr\rvert\\
&\leq
\sum_{i}\bigl(f(a_{i+1})- f(a_{i})\bigr)\cdot\eps' \tag{by definition of $n_2$}\\
&\leq f(v_0)\cdot\eps'\\
&\leq \eps. \tag{by definition of $\eps'$}
\end{align*}

\end{proof}

\section{Proof of Corollary~\ref{c:weakGC}}
\label{app:weakGC}

\begin{proof}
We begin with the first item.
Let $\delta > 0$.
It suffices to prove that
\[\Pr \left[\forall t  : \bigl\lvert F(t) - F_n(t)\bigr\rvert \leq \eps\cdot F(t)^{\alpha(n)} \right] \leq 3\delta\]
for a large enough $n$.
To this end, we set $q,p$ so that each of the first two summands
in Lemma~\ref{lem:quant} is at most $\delta$. Specifically,
\[q  = \delta\]
and
\[p=\left(\frac{\eps\delta\ln\Bigl(\frac{1+\alpha}{2\alpha}\Bigr)}{\ln\ln\bigl(\frac{n}{\delta}\bigr)
+\ln\Bigl(\frac{1+\alpha}{2\alpha}\Bigr)}\right)^{\frac{2}{1-\alpha}}.\]
As required by the premise of Lemma~\ref{lem:quant}, $p\leq \eps^{\frac{1}{1-\alpha}}$.
(The other requirement, $n\geq\eps^{-\frac{1}{1-\alpha}}$, will be verified at the end of the proof.)

Plugging these values for $p,q$, the last summand becomes
\[ 2\exp\bigl(-2n (\eps p^\alpha)^2\bigr)
= 2\exp\left(-2n \eps^2
\left(\frac{\eps\delta\ln\Bigl(\frac{1+\alpha}{2\alpha}\Bigr)}{\ln
\ln\bigl(\frac{n}{\delta}\bigr)+\ln\Bigl(\frac{1+\alpha}{2\alpha}\Bigr)}\right)^{\frac{4\alpha}{1-\alpha}}\right).\]
We need to verify that the above expression becomes less than
$\delta$ for large $n$. Equivalently, that
\[
\lim_{n\to\infty}n\left(\frac{\eps\delta\ln\Bigl(\frac{1+\alpha}{2\alpha}\Bigr)}{\ln
\ln\bigl(\frac{n}{\delta}\bigr)+\ln\Bigl(\frac{1+\alpha}{2\alpha}\Bigr)}\right)^{\frac{4\alpha}{1-\alpha}}=
\infty.
\]
Rewriting $\alpha = 1-\beta$ gives
\[
\lim_{n\to\infty}n\left(\frac{\eps\delta\ln\Bigl(1+\frac{\beta}{2-2\beta}\Bigr)}{
\ln
\ln\bigl(\frac{n}{\delta}\bigr)+\ln\Bigl(1+\frac{\beta}{2-2\beta}\Bigr)}\right)^{\frac{4-4\beta}{\beta}}=
\infty.
\]
Since we are focusing on small value of $\beta$ we can assume that
$\beta\geq 1/2$.
Using that $\nicefrac{x}{2}\leq \ln(1+x) \leq x$ for $x\in[0,1]$ and
$\beta\leq \nicefrac{1}{2}$, it suffices that we show
\[\lim_{n\to\infty}n  \left(\frac{\eps\delta\nicefrac{\beta}{2}}{ \ln \ln(\nicefrac{n}{\delta})+1}\right)^{\frac{1}{\beta}}=\infty,\]
or, by taking ``$\ln$'', that
\[\lim_{n\to\infty}\biggl(\ln n - \frac{1}{\beta}\Bigl(\ln(\nicefrac{1}{\eps}) + \ln(\nicefrac{1}{\delta}) +
 \ln(\nicefrac{4}{\beta})+\ln\bigl(\ln \ln(\nicefrac{n}{\delta})+1\bigr)\Bigr)\biggr)=\infty.\]
To this end, it suffices that $\nicefrac{1}{\beta}\ln(\nicefrac{1}{\beta})\leq \nicefrac{\ln(n)}{2}$,
which holds for $\beta \geq c\cdot \nicefrac{\ln\ln(n)}{\ln(n)}$, where $c$ is a sufficiently large constant.

It remains to check that the condition stated in Lemma~\ref{lem:quant},
that $n\geq \eps^{-\frac{1}{1-\alpha}}= \eps^{-\frac{1}{\beta}}$,
is satisfied.
Indeed, for a sufficiently large $n$
\[\eps^{-\frac{1}{\beta}} \leq \nicefrac{1}{\eps}^{c\cdot\nicefrac{\ln(n)}{\ln\ln(n)}}=
\exp\bigl(c\ln(\nicefrac{1}{\eps})\cdot\nicefrac{\ln(n)}{\ln\ln(n)}\bigr)
< \exp\bigl(\ln(n)\bigr)=n.\]

\begin{sloppypar}
For the second item, let $Y_1\leq Y_2 \leq \cdots \leq Y_n$ 
denote the sequence obtained by sorting $X_1,X_2,\ldots,X_n$.
Note that it suffices to show that the probability that
\[Y_1 < \frac{1}{2n}\]
is at least $\nicefrac{1}{2}$:
indeed, this event implies that
\begin{align*}
F_n\left(\frac{1}{2n}\right) - F\left(\frac{1}{2n}\right) 
&\geq 
\frac{1}{n} - \frac{1}{2n}\\
&= \frac{1}{2n}\\
&\geq  \frac{1}{10}\cdot\left(\frac{1}{2n}\right)^{1-\frac{1}{2\ln n}} \tag{since $n\geq 2$}\\
&= 
\frac{1}{10}\cdot F\left(\frac{1}{2n}\right)^{1-\frac{1}{2\ln n}},
\end{align*}
which implies the conclusion with $c_2=\nicefrac{1}{2}$.
\end{sloppypar}

Thus, it remains to show that with probability of at least $\frac{1}{2}$, we have $Y_1 < \frac{1}{2n}$:
\[\Pr\left[Y_1 \geq \frac{1}{2n}\right] = \Pr\left[\forall i\leq n:~X_i \geq \frac{1}{2n}\right] = \left(1-\frac{1}{2n}\right)^n \geq \frac{1}{2}.\qedhere\]

\end{proof}

\end{document}